\newcounter{optproblem}
\theoremstyle{plain}
\newtheorem{theorem}{Theorem}[section]
\newtheorem{proposition}[theorem]{Proposition}
\newtheorem*{lemma*}{Lemma}
\DeclareMathAlphabet{\pazocal}{OMS}{zplm}{m}{n}
\DeclareMathAlphabet{\mathpzc}{OMS}{pzc}{m}{it}
\setlist[itemize]{leftmargin=*}
     \def\RR{\mathbb{R}}
 \def\cB{{\cal  B}}
 \def\cC{{\cal  C}}
 \def\cW{{\cal  W}}
\def\+#1{\mathcal{#1}}
\def\-#1{\textup{#1}}
\def\set#1{\left\{ #1 \right\}}
\def\pth#1{\left( #1 \right)}
\def\defeq {\coloneqq}
\newcommand{\La}{\left\langle\kern-0.64ex\left\langle}
\newcommand{\Ra}{\right\rangle\kern-0.64ex\right\rangle}
\def\Norm#1#2{{\left\vert\kern-0.4ex\left\vert\kern-0.4ex\left\vert #1
    \right\vert\kern-0.4ex\right\vert\kern-0.4ex\right\vert}_{#2}}
\def\norm#1#2{{\left\|#1\right\|}_{#2}}
\def\ltwonorm#1{\norm{#1}{2}}
\newcommand{\1}{{\rm 1}\kern-0.25em{\rm I}}
\def\indict#1{{\rm 1}\kern-0.25em{\rm I}_{\set{#1}}}
\def\set#1{\left\{#1\right\}}
\def \Pr {\textup{Pr}}
\newcommand{\Prob}[1]{\Pr\left[#1\right]}
\newcommand{\beq}{\begin{equation}}
\newcommand{\eeq}{\end{equation}}
\newcommand{\beqa}{\begin{eqnarray}}
\newcommand{\eeqa}{\end{eqnarray}}
\newcommand{\beqas}{\begin{eqnarray*}}
\newcommand{\eeqas}{\end{eqnarray*}}
\def\bal#1\eal{\begin{align}#1\end{align}}
\def\bals#1\eals{\begin{align*}#1\end{align*}}
\def\bsal#1\esal{\begin{small}\begin{align}#1\end{align}\end{small}}
\def\bsals#1\esals{\begin{small}\begin{align*}#1\end{align*}\end{small}}
\def\bsfal#1\esfal{\begin{small}\begin{flalign}#1\end{flalign}\end{small}}
\begin{document}

%
\title{Efficient Visual Transformer by Learnable Token Merging}
%
%
%
%

\author{Yancheng~Wang and
        Yingzhen~Yang
\IEEEcompsocitemizethanks{\IEEEcompsocthanksitem Yancheng~Wang and Yingzhen Yang are with School of Computing and
Augmented Intelligence, Arizona State University, Tempe, AZ, 85281.\protect\\
E-mail: ywan1053@asu.edu, yingzhen.yang@asu.edu
}
}

\IEEEtitleabstractindextext{%
\begin{abstract}
Self-attention and transformers have been widely used in deep learning. Recent efforts have been devoted to incorporating transformer blocks into different neural architectures, including those with convolutions, leading to various visual transformers for computer vision tasks. In this paper, we propose a novel and compact transformer block, Transformer with Learnable Token Merging (LTM), or LTM-Transformer. LTM-Transformer performs token merging in a learnable scheme. LTM-Transformer is compatible with many popular and compact transformer networks, and it reduces the FLOPs and the inference time of the visual transformers while maintaining or even improving the prediction accuracy. In the experiments, we replace all the transformer blocks in popular visual transformers, including MobileViT, EfficientViT, ViT, and Swin, with LTM-Transformer blocks, leading to LTM-Transformer networks with different backbones. The LTM-Transformer is motivated by reduction of Information Bottleneck, and a novel and separable variational upper bound for the IB loss is derived. The architecture of the mask module in our LTM blocks, which generates the token merging mask, is designed to reduce the derived upper bound for the IB loss. Extensive results on computer vision tasks evidence that LTM-Transformer renders compact and efficient visual transformers with comparable or much better prediction accuracy than the original visual transformers. The code of the LTM-Transformer is available at \url{https://github.com/Statistical-Deep-Learning/LTM}.
\end{abstract}

\begin{IEEEkeywords}
  Visual Transformers, Learnable Token Merging, Information Bottleneck, Variational Upper Bound, Compact Transformer Networks
\end{IEEEkeywords}}

\maketitle
\IEEEdisplaynontitleabstractindextext
\IEEEpeerreviewmaketitle
\IEEEraisesectionheading{\section{Introduction}\label{sec:introduction}}

\IEEEPARstart{B}{uilding} upon the success of Transformer in natural language processing ~\cite{vaswani2017attention}, visual transformers have demonstrated remarkable performance across a wide range of tasks~\cite{yuan2021tokens, dosovitskiy2020image, liu2021swin, zhu2020deformable, cai2022efficientvit, wang2024visual, wang2024learning}. However, the achievements of visual transformers are accompanied with heavy computational costs~\cite{dosovitskiy2020image, touvron2021training}, making their deployment impractical under resource-limited scenarios. The aforementioned limitations have spurred recent research endeavors aimed at developing efficient visual transformers.
In this paper, we study the problem of accelerating visual transformers by token merging.

Token merging is an effective method for reducing the FLOPs and improving the inference speed of visual transformers~\cite{han2015learning, zhou2020rethinking, sun2021cascade, kim2024token, bonnaerens2023learned, ToMe}. However, most existing token merging methods~\cite{rao2021dynamicvit, ToMe, kim2024token, bonnaerens2023learned} largely sacrifice the prediction accuracy of the original transformer networks for reduced computation costs~\cite{ToMe, bello2019attention}. These methods~\cite{kim2024token, ToMe} generally focus on identifying and merging similar tokens by averaging their features. However, such merging strategies, which are based solely on feature similarity, can potentially diminish the informative features in the tokens that are critical to the prediction tasks. Therefore, it remains an interesting and important question that if we can perform token merging while preserving a compelling performance of the visual transformers after token merging. To this end, we propose a novel transformer block, Transformer with Learnable Token Merging,
or LTM-Transformer, which learns how to merge tokens while exhibiting a compelling generalization capability of the transformer with merged tokens.

\noindent\textbf{Motivation.}
Due to the fact that the FLOPs of a visual transformer largely depend on the number of tokens in all the transformer blocks, the FLOPs of a visual transformer can be significantly reduced by reducing the number of tokens in all the transformer blocks. As reviewed in Section~\ref{sec:efficient-transformers-related-works}, the existing token merging methods do not have a principled way of merging original tokens based on their informativeness and importance for classification tasks.
\textbf{Our goal is to design a principled informative token merging method to merge the output tokens of every transformer block into fewer tokens without largely sacrificing the prediction accuracy of the original visual transformer, and more informative original tokens contribute more to the merged tokens in the token merging process.} However, directly merging the output tokens, even by carefully designed methods~\cite{kim2024token, bonnaerens2023learned, ToMe}, would adversely affect the performance of the model. In this paper, we propose to maintain a compelling prediction accuracy of a visual transformer with token merging by an informative token merging process. In our LTM-Transformer block, the original attention output tokens of a transformer block are merged into less target tokens, and every target token is an informative weighted average of the original output tokens. All the target tokens, or the merged tokens, are the final attention output tokens for the LTM-Transformer block, which are fed to an MLP to produce the output of the LTM-Transformer block as illustrated by Figure~\ref{fig:ltm}.

Such a token merging process in LTM-Transformer is primarily inspired by the well-known presence of considerable redundancy in the original output tokens of transformer blocks~\cite{rao2021dynamicvit, ToMe}. As different tokens have varying importance in modeling the visual features at a particular transformer block, it is natural to compute an informative aggregation of the original attention output tokens as the final (target) attention output tokens so that more informative and more important tokens contribute more to the merged tokens with a larger weight in the weighted average in the aggregation process.

Such an idea of informative token merging can also be viewed from the perspective of Information Bottleneck (IB).
Let $X$ be the input image. Let $Z$ be the original attention output tokens, which are merged into the merged tokens denoted by $\tilde X$, and let $Y$ be the ground truth training labels for a classification task. $\tilde X$ has fewer tokens than $Z$. The principle of IB is to maximize the mutual information between $\tilde X$ and $Y$ while minimizing the mutual information between $\tilde X$ and $X$. That is, IB encourages the network to learn the merged tokens more correlated with the class labels while reducing their correlation with the input. Extensive empirical and theoretical works have evidenced that models respecting the IB principle enjoy compelling generalization. With the informative token merging process in LTM-Transformer, the merged tokens $\tilde X$ are the informative aggregation of the original attention output tokens $Z$,  so $\tilde X$ are less correlated with the training images and in this manner the IB principle is better adhered. This is reflected in Table~\ref{tab:ablation-IB-loss_finetune} and Table~\ref{tab:ib_loss_ablation_appendix} in Section~\ref{sec:ablation-study_IB}, where a model for ablation study with token merging, ToMe, enjoys less IB loss than the vanilla transformer, MobileViT-S. This observation indicates that the IB principle is better respected by the
token merging process in ToMe. In order to further decrease the IB loss, we propose an Information Bottleneck (IB) inspired token merging process, where a LTM-Transformer block generates an informative token merging mask, which reduces the IB loss for visual transformers. For example, our model termed ``LTM-MobileViT-S'' in Table~\ref{tab:ablation-IB-loss_finetune} and Table~\ref{tab:ib_loss_ablation_appendix} is the visual transformer with the IB loss reduced by replacing all the transformer blocks in MobileViT-S with LTM-Transformer blocks so that more informative merged tokens are generated by the proposed informative token merging process.
While ToMe hurts the prediction accuracy compared to the vanilla model,
our LTM-Transformer enjoys even higher top-1 accuracy than
the vanilla MobileViT-S, and we have the same observations for MobileViT-XS
and EfficientViT.

\noindent\textbf{Contributions. }
The contributions of this paper are presented as follows.

First, we present a novel and compact transformer block termed Transformer with Information Bottleneck inspired Token Merging, or LTM. Our LTM block
generates an informative token merging mask which reduces the IB loss. The LTM blocks can be used to replace all the transformer blocks in many popular vision transformers, rendering compact vision transformers with competitive performance. The effectiveness of LTM is evidenced by replacing all the transformer blocks in popular vision transformers, including MobileViT~\cite{mobilevit},  EfficientViT~\cite{cai2022efficientvit}, ViT~\cite{dosovitskiy2020image}, and Swin~\cite{liu2021swin}, with LTM blocks,  for image classification, object detection and instance segmentation tasks.

Second, we propose an informative token merging process for vision transformers, which can reduce the IB loss. As a first step, we derive a
novel and \textit{separable} variational upper bound for the IB loss associated with token merging, which is $I(\tilde X(G), X) - I(\tilde X(G),Y)$ where
$I(\cdot,\cdot)$ denotes mutual information and $G$ is the token merging mask in LTM.
We then view a transformer with multiple LTM blocks as an iterative process for the reduction of the IB loss by gradient descent,
and every LTM block simulates one-step gradient descent on the variational upper bound for the IB loss.
Inspired by this understanding, the token merging mask at the current layer is generated from
the token merging mask at the previous layer and the input tokens at the current layer by a learnable mask module, following the formula of gradient descent as in (\ref{eq:Gmask-GD-MLP-SGD}) in Section~\ref{sec:LTM-token-mask-optimization}.
As a result, such informative token merging process generated in a network with LTM blocks
enjoys reduced IB loss, which is evidenced in our ablation study in Section~\ref{sec:ablation-study_IB}.
Due to the separability of the variational upper bound for the IB loss, a neural network with LTM blocks can be trained in an end-to-end manner with standard SGD.

\noindent \textbf{Visualization for Informative Token Merging.}
Figure~\ref{fig:merge-weights} illustrates the informative token merging process by LTM with a comparison to LTMP. We remark that LTMP uses uniform merging weights, so their token merging process would not weight informative tokens property, resulting in their inferior performance in classification.

It is worthwhile to mention that our LTM models can be either fine-tuned from pre-trained backbones or trained from scratch. As evidenced in Table~\ref{tab:finetune}, our LTM models always outperform the current state-of-the-art token merging methods, including the fine-tuning-based method LTMP~\cite{bonnaerens2023learned}, when fine-tuned for the same number of epochs.
We remark that as shown in Table~\ref{tab:ablation-IB-loss_finetune}, the baseline token merging method, ToMe, and LTMP, can already reduce the IB loss. By replacing all the transformer blocks with our LTM blocks, the networks with LTM exhibit even smaller IB loss and enjoy higher classification accuracy and less FLOPs, either trained from scratch or fine-tuned from pre-trained models. Furthermore, as shown in Table~\ref{tab:imagenet_results}, our LTM models also outperform all the competing token merging methods when trained from scratch.  Importantly, extensive experiment results on various computer vision tasks demonstrate the compelling performance of LTM networks compared to the competing baselines.


This paper is organized as follows. The related works in efficient vision transformers and compression of vision transformers by pruning or token merging are discussed in Section~\ref{sec:related-works}. The formulation of LTM is detailed in Section~\ref{sec:formulation}. The effectiveness of LTM is demonstrated in Section~\ref{sec:experiments} for image classification, object detection and instance segmentation tasks, by replacing all the transformer blocks of various popular vision transformers, including MobileViT~\cite{mobilevit},  EfficientViT~\cite{cai2022efficientvit}, ViT~\cite{dosovitskiy2020image}, and Swin~\cite{liu2021swin}, with LTM blocks.
We conclude the paper in Section~\ref{sec:conclusion}.
We use $[n]$ to denote natural numbers between $1$
and $n$ inclusively.

\section{Related Works}\label{sec:related-works}
\subsection{Efficient Visual Transformers}
\label{sec:efficient-transformers-related-works}
Visual transformer models have recently achieved superior performance on a variety of computer vision applications~\cite{dosovitskiy2020image, liu2021swin, carion2020end, zhu2020deformable, wang2022adaptive}.
However, visual transformers often encounter high computational demands due to the quadratic complexity of the point-wise attention and numerous Multi-Layer Perceptron (MLP) layers.
To mitigate the challenges of high computational costs, various strategies have been developed \cite{zhu2020deformable, yuan2021tokens}, primarily aimed at refining the network architectures and incorporating sparse mechanisms for efficient computation. These include the integration of convolutions into transformer networks~\cite{mobilevit, cai2022efficientvit, liu2023efficientvit}, the use of knowledge distillation for training more efficient transformers~\cite{graham2021levit, radosavovic2020designing, gong2022nasvit}, and compressing existing visual transformers with methods such as pruning~\cite{SViTE, WDPruning, kong2022spvit}.
Techniques for compressing visual transformers generally fall into three categories: (1) Channel Pruning, which targets the elimination of superfluous heads and channels within ViT blocks~\cite{SViTE, ViT-Slim, zheng2022savit}; (2) Block Pruning, which involves removing redundant transformer blocks~\cite{UVC, WDPruning}; (3) Token Pruning and Token Merging, which prune less important tokens and merge similar tokens in the input of transformer blocks~\cite{rao2021dynamicvit, kong2022spvit, ToMe, VTC-LFC}.

In this paper, we focus on learning to merge tokens guided by the information bottleneck theory of deep learning and primarily review existing works on Token Pruning and Merging~\cite{VTC-LFC, rao2021dynamicvit, ToMe, bonnaerens2023learned, kim2024token}. DynamicViT~\cite{rao2021dynamicvit} observes that the prediction in visual transformers is only based on a subset of the most informative tokens and proposes a hierarchical token sparsification framework to prune redundant tokens.
ToMe~\cite{ToMe} proposes a graph-based matching algorithm that combines similar tokens in each visual transformer block of a pre-trained visual transformer.
LTMP~\cite{bonnaerens2023learned} learns threshold masking modules that dynamically determine which tokens to merge and prune in a unified framework similar to DynamicViT.
ToFu~\cite{kim2024token} also combines token pruning and token merging. Instead of averagely merging similar tokens, ToFu proposes a conventional average merging module to improve the quality of merged tokens.

\subsection{Related Works about Information Bottleneck}
\label{sec:IB-related-works}
\cite{saxe2019information} provides the first in-depth analysis of conventional information bottleneck (IB) theories and deep learning to establish the connection between the nonlinearity of neural networks and the compression phase of training.
Building on the theory of IB, \cite{lai2021information} proposes a probabilistic attention module reducing mutual information between the input and the masked representation while increasing mutual information between the masked representation and the task label.
Further exploring the mechanics of IB in deep learning, \cite{zhou2022understanding} finds that self-attention mechanisms can be interpreted as iterative steps in optimizing the IB objective, which explains the advantages of self-attention in learning robust representation.
Distinct from most existing methods that implicitly incorporate the IB principle, our work adopts a direct and innovative approach. We aim to optimize a
novel and separable variational upper bound of the IB loss with a learnable token merging method. The proposed LTM-Transformer leads to compelling performance on many popular visual transformer architectures with lower computation cost, benefiting from the learnable token merging mechanism guided by the IB principle.



\begin{figure*}[!htbp]
\begin{center}
     \begin{subfigure}[b]{0.375\textwidth}
        \centering
        \includegraphics[width=1\textwidth]{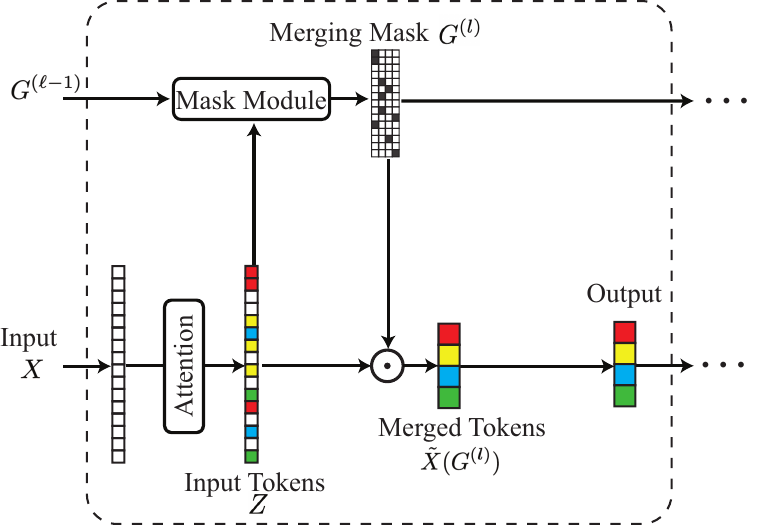}
        \caption{LTM block for regular transformers, such as ViT and Swin.}
        \label{subfig:regular}
    \end{subfigure}
    \hspace{2mm}
    \begin{subfigure}[b]{0.5\textwidth}
        \centering
        \includegraphics[width=1\textwidth]{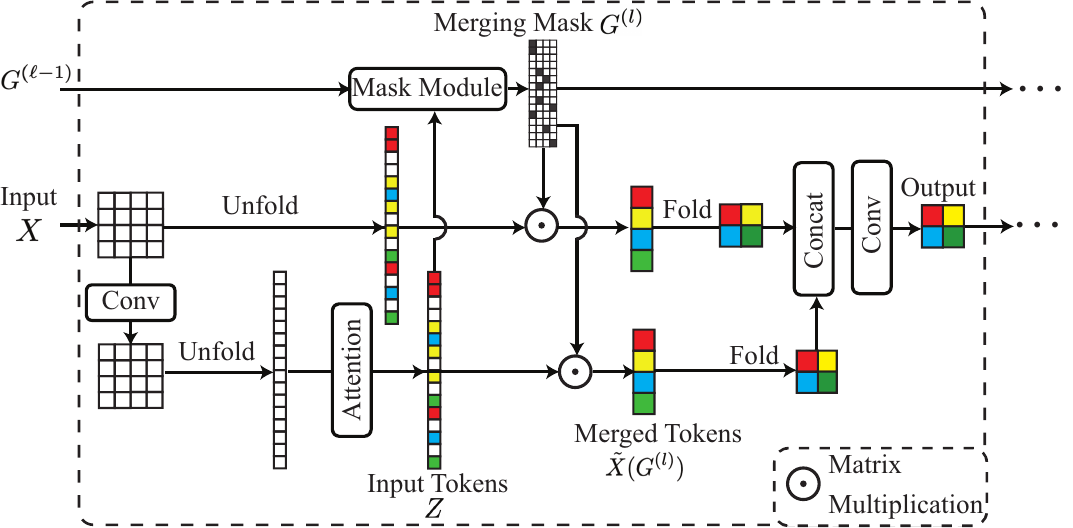}
        \caption{LTM block for efficient transformers, such as MobileViT and EfficientViT.}
        \label{subfig:efficient}
    \end{subfigure}
\end{center}
\vspace{-3mm}
\caption{Overall framework of Learnable Token Merging (LTM)-Transformer block for regular transformer blocks such as ViT and Swin (a), and
  efficient transformer blocks such as MobileViT and EfficientViT (b).}
  \vspace{-3mm}

\label{fig:ltm}
\end{figure*}
\section{Formulation}
\label{sec:formulation}

In this section, we first illustrate  how to perform
token merging using a token merging mask.
We then describe how to generate the token merging mask from
a learnable mask module in a LTM-Transformer block, as well as
the training algorithm of a neural network with
LTM-Transformer blocks. We derive a novel and separable variational upper bound for  the IB loss, and the token merging masks are
generated to reduce such variational upper bound for
the IB loss.

\subsection{Token Merging by Learnable Token Merging Masks}
\label{sec:channel-selection-attention-weights}
Given the input feature tokens $X\in \RR^{N\times D}$ where $N$ is the number of  tokens and $D$ is the token dimension, the
LTM block first applies the self-attention module on the input feature tokens by $Z=\text{ATTN}(X) \in \RR^{N \times D}$, where $\text{ATTN}(\cdot)$ is the regular QKV self-attention operation~\cite{ViT}. As illustrated in
Figure~\ref{fig:ltm}, every LTM block
has a learnable mask module that generates the token merging mask
$G^{(\ell)}$ where $\ell$ is the index of the current layer or block.
The LTM block merges the $N$ tokens of $Z$
into $P$ tokens with $P < N$ by multiplying $Z$ with
the token merging mask $G^{(\ell)}\in\RR^{N\times P}$.
We set $P=\lceil r\times N \rceil$,
where $r\in(0,1)$ is termed the compression ratio for LTM, and a smaller $r$ renders less merged tokens after token merging. The token merging mask $G^{(\ell)}$ of the $\ell$-th transformer block is generated by the token merging mask $G^{(\ell-1)}$ of the previous layer and the feature tokens $Z$, which is motivated by reducing the IB loss and
detailed in Section~\ref{sec:LTM-token-mask-optimization}.
The token merging mask $G^{(1)}$ for the first transformer block is generated by applying an existing learnable token merging method, LTMP~\cite{bonnaerens2023learned}, which generates a binarized token merging mask $M\in [0,1]^{N\times P}$ using Gumbel-Softmax with $N\times P$ learnable parameters.
After obtaining the merging mask $G^{(\ell)}$, the features tokens of $Z$ are merged into $P$ tokens by $\tilde{X}(G^{(\ell)}) = \pth{Z^{\top}G^{(\ell)}}^{\top}\in\mathbb{R}^{P\times D}$, which is then passed to the following MLP layers in the transformer block.


In addition to merging tokens in regular transformer blocks such as ViT~\cite{ViT} and Swin~\cite{liu2021swin}, the LTM-Transformer block
can also be applied to efficient transformer blocks
widely applied in efficient visual transformer architectures
such as MobileViT~\cite{mobilevit} and
EfficientViT~\cite{cai2022efficientvit}. Regular transformer blocks obtain the output by sequentially applying the attention operation and the MLP on the input feature tokens.
However, efficient transformer blocks usually contain residual connections following the design of residual connections in Convolutional Neural Networks (CNNs). That is, these blocks maintain the same shapes for the input $X$ and the self-attention output $Z$ and concatenate them to produce the output features of the current transformer block. As a result, we cannot only merge the tokens of $Z$. Instead, our LTM-Transformer block merges the tokens of both $X$ and $Z$ so that the number of merged tokens for $X$ and $Z$ is the same.
To this end, we apply the same token merging mask $G^{(\ell)}$ to merge both $X$ and $Z$. As a result, the compressed $X$ and $Z$ are of the same shape after the token merging process, and they can still be concatenated, which is illustrated in Figure~\ref{subfig:efficient}. In addition, transformer blocks in the efficient visual transformers are usually accompanied with convolution operations so that they need to maintain the feature tokens in a three-dimensional format $X\in\mathbb{R}^{H\times W\times D}$ as illustrated in Figure~\ref{subfig:efficient}. To apply our token merging method on efficient transformer blocks, we set the number of merged tokens after token merging as $P=H'\times W'$, where $r$ is the compression ratio, and
$H'=\lceil H\times \sqrt{r}\rceil,W'=\lceil W\times\sqrt{r}\rceil$. Therefore, the merged tokens can still be reshaped into three-dimensional features for later convolution operations.
The analysis about the inference computation cost, or the FLOPs, of the LTM transformer block for token merging in both regular transformers and efficient transformers as illustrated in Figure~\ref{fig:ltm} is detailed below.

\noindent \textbf{Computation Cost Analysis of LTM-Transformer for Token Merging.} We hereby analyze the additional inference computation cost, or the FLOPs, of the LTM transformer block for token merging in both regular transformers and efficient transformers as illustrated in Figure~\ref{fig:ltm}. Let $D$ be the dimension of input tokens and $N$ be the number of tokens. The FLOPs of the token merging in an LTM transformer block in regular visual transformers is $6CDP+3C+ND^2+NDP$, where $6CDP+3C+ND^2$ is the FLOPs for calculating the merging mask and $NDP$ is the cost for applying the merging mask on the input tokens. In the LTM transformer block of efficient visual transformers, the additional FLOPs of the token merging is $6CDP+3C+ND^2+2NDP$, since the merging mask will be applied to both the input tokens and the merged tokens.



\subsection{Generating Token Merging Mask by Reducing the Variational Upper Bound for the IB Loss}
\label{sec:LTM-token-mask-optimization}
We describe how to generate the token merging mask in a
LTM-Transformer block in this subsection, and the generation of the token
merging mask is inspired by reduction of the IB loss. We first introduce the setup where the IB loss can be specified.

Given the training data $\set{X_i,y_i}_{i=1}^n$ where $X_i$
is the $i$-th input training feature and $y_i$ is the corresponding class label. Let $Z_i$ be the self-attention output tokens of
the $X_i$, and $\tilde X_i(G) = \pth{Z_i G}^{\top}$ is the merged tokens
with $G$ being the token merging mask.
We first specify how to compute the IB loss,
$\textup{IB}(G) = I(\tilde X(G),X)
-I(\tilde X(G),Y)$ which depends on  $G$ and other network
parameters, $X$ is a random variable representing the input feature which takes values in $\set{X_i}_{i=1}^n$,
$\tilde X(G)$ is a random variable representing
the merged tokens
 which takes values in $\set{\tilde X_i(G)}_{i=1}^n$.
 $Y$ is a random variable representing the class label
 which takes values in $\set{y_i}_{i=1}^n$. We first compute the class centroids $\set{\tilde \cC_a}_{a=1}^C$ and $\set{\cC_b}_{b=1}^C$ for the
merged tokens and the input features by averaging the merged tokens and the input features in each class, where $C$ is the number of classes. We also abbreviate $\tilde X(G)$ as $\tilde X$ for simplicity of the notations.  Then we define the probability that $\tilde X$ belongs to class $\tilde \cC_a$ as $\Prob{\tilde X \in a} = \frac 1n \sum\limits_{i=1}^n  \phi(\tilde X_i,a)$ with
$\phi(\tilde X_i,a) = \frac{\exp\left(-\ltwonorm{\tilde X_i -
\tilde \cC_a}^2\right)}{\sum_{a=1}^{A}\exp\left(-\ltwonorm{\tilde X_i - \tilde \cC_a}^2\right)}$. Similarly, we define the probability that $X_i$ belongs to class $\cC_b$
as $\Prob{X \in b}
= \frac 1n \sum\limits_{i=1}^n  \phi(X_i,b)$. Moreover, we have the joint probabilities $\Prob{\tilde X \in a, X \in b}
= \frac 1n \sum\limits_{i=1}^n  \phi(\tilde X_i,a) \phi(X_i,b)$ and
$\Prob{\tilde X \in a, Y = y}
= \frac 1n \sum\limits_{i=1}^n \phi(\tilde X_i,a) \indict{y_i = y}$ where $\indict{}$ is an indicator function. As a result, we can compute the mutual information $I(\tilde X(G), X)$
and $I(\tilde X(G), Y)$ by
\noindent\resizebox{1\columnwidth}{!}{
    \begin{minipage}{1\columnwidth}
\bals
I(\tilde X(G), X) &= \sum\limits_{a=1}^C \sum\limits_{b=1}^C
\Prob{\tilde X(G) \in a, X \in b} \log{\frac{\Prob{\tilde X(G) \in a, X \in b}}
{\Prob{\tilde X(G) \in a}\Prob{X \in b}}}, \\
I(\tilde X(G), Y) &= \sum\limits_{a=1}^C \sum\limits_{y=1}^C
\Prob{\tilde X(G) \in a, Y = y} \log{\frac{\Prob{\tilde X (G) \in a, Y = y}}
{\Prob{\tilde X (G)\in a}\Prob{Y = y}}},
\eals
        \vspace{1mm}
    \end{minipage}
}
\noindent and then compute the IB loss $\textup{IB}(G)$.
As explained in our motivation, we aim to perform token merging
while can reduce the IB loss. However, directly optimizing the IB loss
in the standard SGD training is difficult as the IB loss
is not separable.
Given a variational distribution $Q(\tilde X \in a| Y=y)$ for $y, a
\in [C]$ computed by Eq. (5) in Section 1.3 of the supplementary, the following theorem gives a variational upper bound,
$\textup{IBB}(G)$, for the IB loss $\textup{IB}(G)$.
$\textup{IBB}(G)$ is separable and thus compatible with SGD
training with minibatches.
\begin{theorem}\label{theorem:IB-upper-bound}
\bal\label{eq:IB-upper-bound}
\textup{IB}(G) \le \textup{IBB}(G) - C_0,
\eal
where $C_0$ is a constant only depending on the input training features
$\set{X_i}_{i=1}^n$, and
\vspace{-.1in}
\bsals
&\textup{IBB}(G) \defeq \frac 1 n \sum\limits_{i=1}^n
\sum\limits_{a=1}^C \sum\limits_{b=1}^C
\phi(\tilde X_i(G),a) \phi(X_i,b)
\log {\phi(X_i,b)}
\nonumber \\
&\phantom{=}-\frac 1n \sum\limits_{i=1}^n\sum\limits_{a=1}^C \sum\limits_{y=1}^C
  \phi(\tilde X_i(G),a) \indict{y_i = y}
  \log{Q(\tilde X \in a| Y=y)}.
\esals
\end{theorem}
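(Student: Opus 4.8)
The plan is to realize $\textup{IBB}(G)$ as the sum of a data-processing upper bound on the compression term $I(\tilde X(G),X)$ and a variational lower bound on the relevance term $I(\tilde X(G),Y)$, up to a quantity that depends only on $\{X_i\}_{i=1}^n$. The first move I would make is to expose the latent structure hidden in the empirical probabilities defined before the theorem: introduce a uniform index $i$ over $[n]$ and note that, conditionally on $i$, the variable $\tilde X$ lies in cluster $a$ with probability $\phi(\tilde X_i,a)$ and, \emph{independently}, $X$ lies in cluster $b$ with probability $\phi(X_i,b)$, while $Y=y_i$. Averaging over $i$ reproduces $\Prob{\tilde X\in a}$, $\Prob{X\in b}$, $\Prob{\tilde X\in a,X\in b}=\tfrac1n\sum_i\phi(\tilde X_i,a)\phi(X_i,b)$ and $\Prob{\tilde X\in a,Y=y}=\tfrac1n\sum_i\phi(\tilde X_i,a)\indict{y_i=y}$ exactly as stated. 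In particular $\tilde X - i - X$ is a Markov chain.

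For the compression term I would apply the data-processing inequality to this chain, $I(\tilde X(G),X)\le I(i,X)=H(X)-H(X\mid i)$, and compute $H(X\mid i)=-\tfrac1n\sum_{i=1}^n\sum_{b=1}^C\phi(X_i,b)\log\phi(X_i,b)$. Since $\sum_{a=1}^C\phi(\tilde X_i(G),a)=1$ for each $i$, inserting this factor of $1$ identifies $-H(X\mid i)$ with the first double sum appearing in $\textup{IBB}(G)$, so $I(\tilde X(G),X)\le H(X)+\big(\text{first sum of }\textup{IBB}(G)\big)$.

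For the relevance term I would use the standard variational lower bound: from the identity $I(\tilde X(G),Y)=H(\tilde X(G))+\sum_{a,y}\Prob{\tilde X(G)\in a,Y=y}\log\Prob{\tilde X(G)\in a\mid Y=y}$, replacing the exact posterior by the given $Q(\tilde X\in a\mid Y=y)$ changes the right side by $-\sum_y\Prob{Y=y}\,\mathrm{KL}\big(\Prob{\tilde X(G)\in\cdot\mid Y=y}\,\|\,Q(\tilde X\in\cdot\mid Y=y)\big)\le 0$, by nonnegativity of KL divergence. Hence $I(\tilde X(G),Y)\ge H(\tilde X(G))+\tfrac1n\sum_{i=1}^n\sum_{a=1}^C\sum_{y=1}^C\phi(\tilde X_i(G),a)\indict{y_i=y}\log Q(\tilde X\in a\mid Y=y)$, which is $H(\tilde X(G))$ minus the second double sum of $\textup{IBB}(G)$.

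Subtracting the two bounds gives $\textup{IB}(G)=I(\tilde X(G),X)-I(\tilde X(G),Y)\le\textup{IBB}(G)+H(X)-H(\tilde X(G))$, and discarding the nonnegative entropy $H(\tilde X(G))$ — the term that would otherwise break separability, being a function of the averages $\Prob{\tilde X\in a}$ rather than an average over samples — yields $\textup{IB}(G)\le\textup{IBB}(G)+H(X)$. This is the claimed inequality with $C_0\defeq-H(X)=\sum_{b=1}^C\Prob{X\in b}\log\Prob{X\in b}$, which is determined by the K-means clusters $\{\cC_b\}$ and the memberships $\{\phi(X_i,b)\}$ of the inputs alone, hence only by $\{X_i\}_{i=1}^n$. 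I expect the step needing the most care to be the compression bound: one must recognize that the empirical joint law of $(\tilde X,X)$ is exactly the law obtained by conditioning on the data index $i$, which is what simultaneously justifies the data-processing inequality and turns $H(X\mid i)$ into the per-sample, minibatch-compatible sum inside $\textup{IBB}(G)$; the remainder is the routine variational estimate together with discarding $H(\tilde X(G))$.
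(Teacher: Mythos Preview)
Your proposal is correct. The treatment of the relevance term $I(\tilde X(G),Y)$ is essentially identical to the paper's: both invoke the variational inequality obtained by inserting $Q(\tilde X\in a\mid Y=y)$, recognize the KL slack, and then discard the nonnegative entropy $H(\tilde X(G))$.

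Where you differ is in the compression term. You introduce the latent data index $i$, observe that $\tilde X - i - X$ is a Markov chain under the empirical joint law, and apply the data-processing inequality to get $I(\tilde X,X)\le I(i,X)=H(X)-H(X\mid i)$; this yields $C_0=-H(X)=\sum_b \Prob{X\in b}\log\Prob{X\in b}$. The paper instead applies the log-sum inequality directly to the decomposition $\Prob{\tilde X\in a,X\in b}=\tfrac1{n^2}\sum_{i,j}\phi(\tilde X_i,a)\phi(X_i,b)$ against $\Prob{\tilde X\in a}\Prob{X\in b}=\tfrac1{n^2}\sum_{i,j}\phi(\tilde X_i,a)\phi(X_j,b)$, obtaining the different constant $C_0=\tfrac1{n^2}\sum_{i,j}\sum_b\phi(X_i,b)\log\phi(X_j,b)$. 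Both constants depend only on $\{X_i\}_{i=1}^n$, so both establish the theorem as stated. Your DPI argument is more conceptual and, by Jensen applied to $\log$, actually gives a slightly larger (hence tighter) $C_0$; the paper's direct log-sum manipulation is more mechanical but makes the separability over samples visible without the detour through the auxiliary index.
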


\begin{proposition}
\label{proposition:Gmask-gradient-descent}
Suppose $\tilde X_i(G) = \pth{Z_i^{\top} G}^{\top} \in \RR^{P \times D}$ with $Z_i \in \RR^{N \times D}$ being the self-attention output tokens for the $i$-th training feature and $G \in \RR^{N \times P}$ is the token merging mask where $N$ is the number of tokens, $D$ is the token dimension, $P$ is the number of merged tokens after token merging, and $\tilde X_i(G)$ denotes the merged tokens. At step $\ell$ of gradient descent on $\textup{IBB}(G)$, we have
\noindent\resizebox{1\columnwidth}{!}{
    \begin{minipage}{1\columnwidth}
\bal
\label{eq:Gmask-gradient-descent}
G^{(\ell)} &= G^{(\ell-1)} - \eta
\nabla_{G} \textup{IBB}(G^{(\ell-1)}) \nonumber \\
&= G^{(\ell-1)} -   \frac{2\eta}{n} \sum\limits_{i=1}^n
\sum\limits_{a=1}^C Z_i \frac{S_{i{a}}^{(l-1)}}{(\gamma_i^{(l-1)})^2} \left(\gamma_i^{(l-1)}\cC_{a} -\zeta_i^{(\ell-1)} \right)  \psi_{i,a}, ~\ell \ge 2,
\eal
        \vspace{1mm}
    \end{minipage}
}
where $S^{(\ell)}_{ia} \defeq
\exp\pth{-\ltwonorm{\tilde X_i(G^{(\ell)})- \tilde \cC_a}^2}$ for $i \in [n]$ and $a \in [C]$,
$\gamma^{(\ell)}_i \defeq \sum\limits_{a=1}^C S^{(\ell)}_{ia}$,
$\zeta^{(\ell)}_i \defeq \sum\limits_{a=1}^C S^{(\ell)}_{ia} \tilde \cC_a$ for $i \in [n]$,
$\psi_{i,a} \defeq \sum\limits_{b=1}^C \phi(X_i,b)
\log {\phi(X_i,b)} - \sum\limits_{y=1}^C \indict{y_i = y} \log{Q(\tilde X \in a| Y=y)}$.
\end{proposition}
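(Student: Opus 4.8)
The plan is to compute the gradient $\nabla_G \textup{IBB}(G)$ directly from the closed-form expression for $\textup{IBB}(G)$ given in Theorem~\ref{theorem:IB-upper-bound}, and then substitute into the gradient-descent update $G^{(\ell)} = G^{(\ell-1)} - \eta \nabla_G \textup{IBB}(G^{(\ell-1)})$. The key observation is that $G$ enters $\textup{IBB}(G)$ only through the soft-assignment coefficients $\phi(\tilde X_i(G),a)$, and $\phi(\tilde X_i(G),a)$ in turn depends on $G$ only through the squared distances $\ltwonorm{\tilde X_i(G) - \tilde \cC_a}^2$, i.e.\ through the quantities $S^{(\ell)}_{ia} = \exp(-\ltwonorm{\tilde X_i(G^{(\ell)}) - \tilde \cC_a}^2)$. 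So the computation factors cleanly: first differentiate $\ltwonorm{\tilde X_i(G) - \tilde \cC_a}^2$ with respect to $G$, then propagate through $S_{ia}$ and through the softmax-type ratio $\phi(\tilde X_i(G),a) = S_{ia}/\gamma_i$ with $\gamma_i = \sum_{a'} S_{ia'}$, and finally multiply by the $G$-independent factor $\psi_{i,a}$ that collects the two logarithmic terms.

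First I would note that $\tilde X_i(G) = (Z_i^\top G)^\top$, so the $p$-th merged token (row of $\tilde X_i$) is $\sum_{m} G_{mp} (Z_i)_{m,:}$, and hence $\partial \tilde X_i(G) / \partial G_{mp}$ picks out $(Z_i)_{m,:}$ in the $p$-th row. Writing $\ltwonorm{\tilde X_i(G) - \tilde \cC_a}^2$ as the sum over merged tokens of $\ltwonorm{(\tilde X_i)_{p,:} - (\tilde \cC_a)_{p,:}}^2$, a routine chain-rule computation gives $\nabla_G \ltwonorm{\tilde X_i(G) - \tilde \cC_a}^2 = 2 Z_i (\tilde X_i(G) - \tilde \cC_a)$ (as an $N \times P$ matrix), which is the source of the factor $2$ and of the $Z_i$ prefactor in~\eqref{eq:Gmask-gradient-descent}. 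Next, $\nabla_G S_{ia} = -S_{ia} \nabla_G \ltwonorm{\tilde X_i(G) - \tilde \cC_a}^2$, and differentiating the ratio $\phi(\tilde X_i(G),a) = S_{ia}/\gamma_i$ via the quotient rule produces a term in $\nabla_G S_{ia}$ and a term in $\nabla_G \gamma_i = \sum_{a'} \nabla_G S_{ia'}$. Collecting these and simplifying is exactly where the combination $\gamma_i^{(\ell-1)} \tilde\cC_a - \zeta_i^{(\ell-1)}$ (with $\zeta_i = \sum_a S_{ia}\tilde\cC_a$) emerges, divided by $(\gamma_i)^2$ from the quotient rule; this is the algebraic heart of the proposition.

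The main obstacle — really the only nontrivial point — is bookkeeping the quotient-rule cancellation so that the $\tilde\cC_a$-distance gradients recombine into the compact form $S_{ia}(\gamma_i \tilde\cC_a - \zeta_i)/\gamma_i^2$ rather than a messy double sum over cluster indices; one has to be careful that the $\psi_{i,a}$ factor multiplies $\phi(\tilde X_i(G),a)$ and not the individual $S$ terms, and that the sum defining $\textup{IBB}$ over $a$ interacts correctly with the sum over $a'$ coming from $\nabla_G \gamma_i$. I would organize this by first differentiating a single generic summand $\phi(\tilde X_i(G),a)\,\psi_{i,a}$ of $\textup{IBB}(G)$ with respect to $G$, obtaining $\psi_{i,a}\,\nabla_G \phi(\tilde X_i(G),a)$, then expanding $\nabla_G \phi(\tilde X_i(G),a)$ and showing it equals $-\frac{2 S_{ia}}{\gamma_i^2} Z_i (\gamma_i \tilde\cC_a - \zeta_i)$ after the cancellation (here I use $\sum_{a'} S_{ia'}(\tilde X_i - \tilde\cC_{a'}) = \gamma_i \tilde X_i - \zeta_i$ and then the $\tilde X_i$ pieces cancel against the $S_{ia}\tilde X_i$ piece). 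Summing over $i \in [n]$ with the $1/n$ normalization and over $a \in [C]$, and inserting into the gradient-descent recursion with step size $\eta$, yields precisely~\eqref{eq:Gmask-gradient-descent}; the restriction $\ell \ge 2$ simply reflects that for $\ell=1$ the mask is initialized separately by $G^{(1)} = \text{Softmax}(f(X/\tau))$ rather than by a gradient step.
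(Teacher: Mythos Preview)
Your proposal is correct and follows essentially the same route as the paper: compute $\nabla_G \phi(\tilde X_i(G),a)$ via the chain rule through $S_{ia}$ and the quotient $S_{ia}/\gamma_i$, observe the cancellation of the $\tilde X_i$ terms to obtain the factor $\gamma_i\tilde\cC_a - \zeta_i$, then multiply by $\psi_{i,a}$ and sum. One minor slip: your intermediate formula $\nabla_G \phi(\tilde X_i(G),a) = -\tfrac{2S_{ia}}{\gamma_i^2} Z_i(\gamma_i\tilde\cC_a - \zeta_i)$ has the wrong sign; carrying out the quotient-rule cancellation you describe actually gives $+\tfrac{2S_{ia}}{\gamma_i^2} Z_i(\gamma_i\tilde\cC_a - \zeta_i)$, which is what is needed to match~\eqref{eq:Gmask-gradient-descent}.
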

The proofs of Theorem~\ref{theorem:IB-upper-bound}
and Proposition~\ref{proposition:Gmask-gradient-descent}
 are deferred to
Section 1 of the supplementary.
Inspired by Proposition~\ref{proposition:Gmask-gradient-descent},
we can understand a transformer with token merging and multiple transformer
blocks as an iterative process which reduces $\textup{IBU}(G)$
by gradient descent, where the $\ell$-th transformer block
performs one-step gradient descent on $\textup{IBU}(G)$ according to
(\ref{eq:Gmask-gradient-descent}). The mask module of at
the $\ell$-th LTM block generates the token merging
mask $G^{(\ell)}$ from $G^{(\ell-1)}$,
 the token merging mask
of the previous block, through
(\ref{eq:Gmask-gradient-descent}). To improve the flexibility
of the token merging mask, an MLP is applied on $Z_i$.
Moreover, as $\textup{IBU}$ and $\nabla_{G} \textup{IBU}$ are separable, (\ref{eq:Gmask-gradient-descent}) can be performed on a
minibatch $\cB_j \subseteq \set{1,\ldots,n}$, which is compatible with minibatch-based training with SGD for a transformer network with LTM blocks. In practice, the mask module of the $\ell$-th LTM block generates $G^{(\ell)}$
by
\bal
\tilde G^{(\ell)} =& G^{(\ell-1)} \nonumber \\
&-   \frac{2\eta}{n} \sum\limits_{i \in \cB_j}
\sum\limits_{a=1}^C Z_i \frac{S_{i{a}}^{(l-1)}}{(\gamma_i^{(l-1)})^2} \left(\gamma_i^{(l-1)}\cC_{a} -\zeta_i^{(l-1)} \right)  \psi_{i,a},
\label{eq:Gmask-GD-MLP-SGD}
\\
G^{(\ell)}  =& \tilde G^{(\ell)} \circ M^{(\ell)}
\label{eq:Gmask}
\eal
where $M^{(\ell)}\in [0,1]^{N\times P}$ is a binarized token merging mask generated by LTMP~\cite{bonnaerens2023learned} for the $\ell$-th LTM block by applying the Gumbel-Softmax operation on $N\times P$ learnable parameters. The mask $ M^{(\ell)}$ in our LTM serves as a learnable token merging mask module.
Since the update formulation in Eq.(\ref{eq:Gmask-GD-MLP-SGD}) does not incorporate any trainable parameters, the number of the trainable parameters of an LTM block is the same as the number of the trainable parameters in a transformer block with LTMP, which is $N\times P$.
\begin{algorithm}[!htb]
\caption{Training Algorithm of LTM-Transformers}\label{Algorithm_training}
{
\small
\begin{algorithmic}[1]
\STATE Initialize the weights of the network
by $\cW = \cW(0)$ through random initialization,
set $t_{\text{train}}$ which is the number of training epochs.

\FOR{$t\leftarrow 1$ to $t_{\text{train}}$}
\IF{$t<t_{\text{warm}}$}
\STATE Perform gradient descent by a standard step of SGD without applying token merging in LTM transformer blocks.
\ELSE
\STATE Update $\tau(\tilde X_i,a)$ for all the classes $a \in [C]$ and $i \in [n]$.
\FOR{$j \leftarrow 1$ to $J$}
\STATE \textbf{Forward step}: generate $\set{G^{(\ell)} }$ for all the LTM blocks by Eq. (\ref{eq:Gmask-GD-MLP-SGD}) using the minibatch
$\cB_j$, the updated
$\set{\tau(\tilde X_i,a)}_{i \in \cB_j,a \in [C]}$,
$\set{Q^{(t-1)}(\tilde X \in a| Y=y)}_{a \in [C], y \in [C]}$, and $\set{\tilde \cC^{(t-1)}_a}_{a=1}^C$, as well as the output of the network
\STATE \textbf{Backward step}: update the MLP layers of the mask modules in all the LTM blocks  as well as all the other weights in the neural network by a standard step of SGD on the cross-entropy loss
\ENDFOR
\STATE Compute $Q^{(t)}(\tilde X \in a| Y=y)$ by Eq. (5) in Section 1.3 of the supplementary, and update the class centroids $\set{\tilde \cC^{(t)}_a}_{a=1}^C$.
\ENDIF
\ENDFOR
\STATE \textbf{return} The trained weights $\cW$ of the network
\end{algorithmic}
}
\end{algorithm}

The training loss of our LTM models are the regular cross-entropy loss. The derived variational upper
bound for the IB loss, $\textup{IBB}$, is used to design the token mask module, where the token merging mask $G^{(\ell)}$ is generated by Eq. (\ref{eq:Gmask-GD-MLP-SGD}) and  Eq. (\ref{eq:Gmask}).  Algorithm~\ref{Algorithm_training} describes the training process of a neural network with LTM-Transformer blocks using the standard cross-entropy loss for a classification problem. It is remarked that all the MLP layers of the mask modules in all the LTM-Transformer blocks, along with other network parameters, are updated with standard SGD. In order to generate the token merging masks for all the LTM-Transformer blocks before a new epoch starts, we update the variational distribution $Q^{(t)}$ at the end of the previous epoch.

\begin{table*}[!hbt]
    \centering
    \resizebox{\textwidth}{!}{
        \begin{tabular}{lccccccccccc}
            \hline
            \multirow{2}{*}{Methods} & \multirow{2}{*}{\# Params.} & \multirow{2}{*}{FLOPs} &Training Time& Inference Time & \multicolumn{6}{c}{Top-1 Accuracy (\%)} \\ \cline{6-11}
            & & & (minutes/epoch)& (ms/batch) & 0& 1 & 5 & 10 & 25 & 50\\
            \hline
            MobileViT-XS~\cite{mobilevit} & 2.3 M & 0.70  G & -  & 11.3 & 74.80 &- & - & - & - & - \\
            ToMe-MobileViT-XS~\cite{ToMe} & 2.3 M & 0.54  G & -  & 10.4 & 72.73 &- & - & - & - & - \\
            ToFu-MobileViT-XS~\cite{kim2024token} & 2.3 M & 0.54  G & -  & 10.7 & 73.32 &- & - & - & - & - \\
            LTMP-MobileViT-XS\cite{bonnaerens2023learned} & 2.3 M & 0.56 G &  16.0 & 10.9 &- & 73.91 & 73.79 & 73.98 & 74.05 & 74.18 \\
            \textbf{LTM-MobileViT-XS (Fine-tuned)} & 2.3 M & 0.52 G &  16.5 & 10.7 &- & \textbf{74.25} & \textbf{74.31} & \textbf{74.54} & \textbf{74.70} & \textbf{74.95} \\
            \hline
            MobileViT-S~\cite{mobilevit} & 5.6 M & 1.40  G & -  & 15.1 & 78.40 &  -& - & - & - & - \\
            ToMe-MobileViT-S~\cite{ToMe} & 5.6 M & 1.22  G & -  & 14.2 & 76.72 &  -& - & - & - & - \\
            ToFu-MobileViT-S~\cite{kim2024token} & 5.6 M & 1.22 G &  - & 14.4 & 77.24 &  -& - & - & - & - \\
            LTMP-MobileViT-S\cite{bonnaerens2023learned} & 5.6 M & 1.26 G &  18.1 & 14.5 & -& 77.53 & 77.69 & 77.82 & 78.03 & 78.14 \\
            \textbf{LTM-MobileViT-S (Fine-tuned)} & 5.6 M & 1.17 G &  19.0 & 14.1 &- &  \textbf{77.72} & \textbf{78.15} & \textbf{78.34} & \textbf{78.85} & \textbf{79.05} \\
            \hline
            EfficientViT-B1 [r224]~\cite{cai2022efficientvit} & 9.1 M & 0.52  G & -  & 10.0& 79.40 & - & - & - & - & - \\
            ToMe-EfficientViT-B1 [r224]~\cite{ToMe} & 9.1 M & 0.47  G &  - & 9.6& 78.81 & - & - & - & - & - \\
            ToFuEfficientViT-B1 [r224]~\cite{kim2024token} & 9.1 M & 0.47  G & -  & 9.8& 79.04 & - & - & - & - & - \\
            LTMP-EfficientViT-B1 [r224]\cite{bonnaerens2023learned} & 9.1 M & 0.50 G &  13.8 & 9.8 & - &79.21 & 79.31  & 79.32 & 79.36  &  79.40\\
            \textbf{LTM-EfficientViT-B1 [r224] (Fine-tuned)} & 9.1 M & 0.44 G & 14.6  & 9.6 & -&\textbf{79.39} & \textbf{79.62} & \textbf{79.85} & \textbf{80.07} & \textbf{80.22} \\
            \hline
            Swin-T~\cite{liu2021swin} & 29.0 M & 4.50 G & -  & 20.8& 81.30 & - & - & - & - & - \\
            ToMe-Swin-T~\cite{ToMe} & 29.0 M & 3.91 G &  - & 17.5& 79.28 & - & - & - & - & - \\
            ToFuSwin-T~\cite{kim2024token} & 29.0 M & 3.91 G & -  & 17.8& 79.65 & - & - & - & - & - \\
            LTMP-Swin-T \cite{bonnaerens2023learned} & 29.0 M & 3.95 G & 21.0  & 17.9 & - & 79.78 & 79.96 & 80.09 & 80.24 & 80.30 \\
            \textbf{LTM-Swin-T (Fine-tuned)} & 29.0 M & 3.82 G & 21.9  & 17.0 & -&\textbf{80.06} & \textbf{80.46} & \textbf{80.79} & \textbf{81.20} & \textbf{81.38} \\
            \hline
            Swin-B~\cite{liu2021swin} & 88.0 M & 15.4 G &  - & 33.9 & 83.50& - & - & - & - & - \\
            ToMe-Swin-B~\cite{ToMe} & 88.0 M & 13.0 G &  - & 29.9 & 81.87 & - & - & - & - & - \\
            ToFu-Swin-B~\cite{kim2024token} & 88.0 M & 13.0 G &  - & 30.1 & 82.04& - & - & - & - & - \\
            LTMP-Swin-B \cite{bonnaerens2023learned} & 88.0 M & 13.2 G &  27.2 & 30.4 & -& 82.24 & 82.39 & 82.45 & 82.51 & 82.55 \\
            \textbf{LTM-Swin-B (Fine-tuned)} & 88.0 M & 12.0 G & 28.9  & 29.6 & - & \textbf{82.50} & \textbf{82.72} & \textbf{82.88} & \textbf{83.43} & \textbf{83.64} \\
            \hline
            ViT-S~\cite{ViT} & 22.1 M & 4.30 G &  - & 22.5& 81.20 & - & - & - & - & - \\
            ToMe-ViT-S~\cite{ToMe} & 22.1 M& 3.82 G & -  & 18.4& 80.04 & - & - & - & - & - \\
            ToFu-ViT-S~\cite{kim2024token} & 22.1 M & 3.82 G &  - & 18.7& 80.19 & - & - & - & - & - \\
            LTMP-ViT-S \cite{bonnaerens2023learned} & 22.1 M & 3.89 G & 21.4  & 19.0 &- & 80.32 & 80.40 & 80.35 & 80.41 & 80.50 \\
            \textbf{LTM-ViT-S (Fine-tuned)} & 22.1 M & 3.70 G & 22.7  & 18.2 &- & \textbf{80.47} & \textbf{80.69} & \textbf{80.94} & \textbf{81.27} & \textbf{81.55} \\
            \hline
            ViT-B~\cite{ViT} & 86.5 M & 17.58 G &  - & 37.2 &83.74 & -& - & - & - & - \\
            ToMe-ViT-B~\cite{ToMe} & 86.5 M & 13.12 G &  - & 31.0 &82.86 & -& - & - & - & - \\
            ToFu-ViT-B~\cite{kim2024token} & 86.5 M & 13.12 G &  - & 31.5 &83.22 & -& - & - & - & - \\
            LTMP-ViT-B \cite{bonnaerens2023learned} & 86.5 M & 13.46 G & 27.2  & 32.7 & -& 83.29 & 83.40 & 83.44 & 83.50 & 83.55 \\
            \textbf{LTM-ViT-B (Fine-tuned)} & 86.5 M & 12.85 G &  28.4 & 30.7 &- & \textbf{83.35} & \textbf{83.57} & \textbf{83.76} & \textbf{83.91} & \textbf{83.96} \\
            \hline
        \end{tabular}
    }
    \vspace{-2mm}
    \caption{Performance comparison between LTM with competing token merging baselines, ToMe~\cite{ToMe}, ToFu~\cite{kim2024token}, and LTMP~\cite{bonnaerens2023learned} in fine-tuning setup on ImageNet. Among the compared methods, ToMe and ToFu do not require training. Both LTM models and LTMP models are fine-tuned for 1, 5, 10, 25, and 50 epochs for fair comparisons.}
    \label{tab:finetune}
\end{table*}
\vspace{-4mm}
\section{Experimental Results}
\label{sec:experiments}

In this section, LTM-Transformers are assessed for the image classification task on the ImageNet-1k dataset. The results in Section~\ref{sec:classification} indicate that LTM outperforms existing state-of-the-art networks while maintaining a more compact architecture either fine-tuned from pre-trained backbones or trained from scratch. In addition, LTM is compared with existing methods on token merging and shows better performance with lower computation costs. Furthermore, in Sections~\ref{sec:object_detection} and~\ref{sec:segmentation}, we demonstrate that the use of LTM-MobileViT and LTM-EfficientViT as feature extraction backbones leads to superior mAP and reduced FLOPs compared to the baseline models for the tasks of object detection and semantic segmentation.
Comprehensive ablation studies are performed in Section~\ref{sec:ablation-study}.
In Section~\ref{sec:study_compression}, we study the impact of compression ratio on LTM.
In Section~\ref{sec:ablation-study_IB}, we perform ablation studies on the effects of LTM-Transformer in reducing the IB loss
and the IB bound. In Section~\ref{sec:IB_different_layers}, we study IB loss and IB bound at different layers of an LTM-Transformer. In Section~\ref{sec:train-test-loss-LTM}, we illustrate the training loss and the test loss in the training process of LTM-Transformers. In Section~\ref{sec:visual_results}, we visualize merged tokens and their merging weights by LTM for selected images from ImageNet. In Section~\ref{sec:layer_compression_ratio}, we study the impact of compression ratio at different layers of the LTM-Transformer. Additional experiment results are presented in Section 2 of the supplementary of our paper. In Section 2.1 of the supplementary, we evaluate the transfer learning capability of the LTM-Transformer. In Section 2.2 of the supplementary, we evaluate the effectiveness of fine-tuning the LTM-Transformer with LoRA. In Section 2.3 of the supplementary, we evaluate the throughput of the LTM-Transformer with variant batch sizes.

\vspace{-.1in}
\subsection{Image Classification}
\label{sec:classification}

\textbf{Implementation details.}
In this section, we evaluate LTM models for ImageNet~\cite{russakovsky2015imagenet} classification. We employ MobileViT-S~\cite{mobilevit}, MobileViT-XS~\cite{mobilevit}, EfficientViT-B1~\cite{cai2022efficientvit}, ViT-S~\cite{ViT},  ViT-B~\cite{ViT}, Swin-T~\cite{liu2021swin}, and Swin-B~\cite{liu2021swin} as backbone architectures. We substitute the conventional transformer blocks in these backbones with LTM blocks.  All the experiments are conducted on four NVIDIA A100 GPUs with a total batch size of $1024$ images. Following prior works~\cite{liu2021swin}, our training incorporates popular data augmentation methods such as RandAugment, Mixup, Cutmix, and random erasing.  We set $\eta$ in Eq. (\ref{eq:Gmask-gradient-descent}) to $1$ in all the experiments. In addition, we apply a softmax operation on the token merging mask at each layer to ensure the aggregation weights for each merged token sum to $1$. In all our experiments, we set the value of compression ratio $r=0.7$ for all our LTM models. We set the compression ratio of the compared methods to $0.75$ to see how LTM-Transformers compare with competing token merging methods with an even more significant reduction in the number of tokens. A study on the impact of the compression ratio $r$ on the performance of the LTM model is performed in Table~\ref{tab:model_comparison} in Section~\ref{sec:study_compression}.

We conduct the experiments of LTM for the token merging of vision transformers under two different training setups, which are the fine-tuning setup and the training-from-scratch setup. The experiments in the fine-tuning setup are conducted following the state-of-the-art token merging method, LTMP~\cite{bonnaerens2023learned}. The training-from-scratch setup is designed to explore the potential of training LTM-Transformers from the beginning while reducing the IB loss with token merging, and the training with different backbones follows the same training settings as the original training process of the corresponding backbones~\cite{mobilevit, cai2022efficientvit, ViT, liu2021swin}.


\begin{table}[!htbp]
    \centering
    \resizebox{1\columnwidth}{!}{
        \begin{tabular}{lrrr}
            \hline
            Model  & \# Params & FLOPs & Top-1 \\
            \hline

             MobileViT-XS  & 2.3 M & 0.7 G & 74.8\\
             ToMe-MobileViT-XS ~\cite{ToMe}  &2.3 M & 0.54 G&72.7\\
             ToFu-MobileViT-XS ~\cite{kim2024token}  &2.3 M & 0.54 G &73.3 \\
             LTMP-MobileViT-XS ~\cite{bonnaerens2023learned}  &2.3 M & 0.56 G  &73.9 \\
             {\bf LTM-MobileViT-XS (Ours) }  & 2.3 M & 0.52 G & \textbf{75.8} \\
            \hline
             MobileViT-S  & 5.6 M & 1.4 G& 78.4 \\
             ToMe-MobileViT-S ~\cite{ToMe}  &5.6 M & 1.22 G&76.7\\
             ToFu-MobileViT-S ~\cite{kim2024token}  &5.6 M & 1.22 G&77.2\\
             LTMP-MobileViT-S ~\cite{bonnaerens2023learned}  &5.6 M & 1.26 G&77.5\\
            {\bf LTM-MobileViT-S (Ours) }& 5.6 M & 1.17 G& \textbf{79.7} \\
            \hline
            EfficientViT-B1 [r224]~\cite{cai2022efficientvit}  & 9.1 M & 0.52 G & 79.4 \\
            S$^2$ViTE-EfficientViT-B1 [r224]~\cite{chen2021chasing}        & 8.2 M & 0.47 G  & 79.0\\
            SPViT-EfficientViT-B1 [r224]~\cite{kong2022spvit}          & 9.2 M     & 0.49 G   & 79.3\\
            SAViT-EfficientViT-B1 [r224]~\cite{zheng2022savit}         & 8.4 M     & 0.47 G   & 79.2\\
            ToMe-EfficientViT-B1 [r224]~\cite{ToMe}  & 9.1 M & 0.47 G &78.8\\
            ToFu-EfficientViT-B1 [r224] ~\cite{kim2024token}  & 9.1 M & 0.47 G &79.0\\
            LTMP-EfficientViT-B1 [r224] ~\cite{bonnaerens2023learned}  & 9.1 M & 0.50 G &79.2\\
            \textbf{LTM-EfficientViT-B1 [r224] (Ours)}  & 9.1 M & 0.44 G & \textbf{80.2} \\
            \hline
            EfficientViT-B1 [r288]~\cite{cai2022efficientvit}  & 9.1 M & 0.86 G & 80.4 \\
            ToMe-EfficientViT-B1 [r288]~\cite{ToMe}  & 9.1 M & 0.73 G &79.7\\
            ToFu-EfficientViT-B1 [r288]~\cite{kim2024token}  & 9.1 M & 0.73 G &79.8\\
            LTMP-EfficientViT-B1 [r288]~\cite{bonnaerens2023learned}  & 9.1 M & 0.76 G &80.0\\
            \textbf{LTM-EfficientViT-B1 [r288] (Ours)}  & 9.1 M & 0.70 G & \textbf{81.0} \\
             \hline
             ViT-S~\cite{dosovitskiy2020image}  & 22.1 M & 4.3 G &81.2\\
             \textbf{LTM-ViT-S (Ours)}~  & 22.1 M & 3.7 G &\textbf{81.8}\\
             ViT-B~\cite{dosovitskiy2020image}  & 86.5 M &17.6 G  & 83.7\\
             \textbf{LTM-ViT-B (Ours)}~  &86.5 M &12.9 G &\textbf{83.9}\\
             Swin-T~\cite{liu2021swin}  & 29.0 M & 4.5 G &81.3 \\
             \textbf{LTM-Swin-T (Ours)}~  & 29.0 M & 3.8 G &\textbf{81.8} \\
             Swin-B~\cite{liu2021swin}  & 88.0 M & 15.4 G &83.5 \\
             \textbf{LTM-Swin-B (Ours)}~  & 88.0 M & 12.0 G  &\textbf{83.8} \\
            \hline
        \end{tabular}
    }
    \vspace{-2mm}
    \caption{Comparisons with baseline methods on ImageNet-1k validation set for the train-from-scratch setup.}
    \label{tab:imagenet_results}
\end{table}
\noindent\textbf{Fine-Tuning Setup.} Our proposed LTM can be straightforwardly applied to token merging with pre-trained models using the fine-tuning setup as in the existing state-of-the-art token merging method, LTMP~\cite{bonnaerens2023learned}. In the fine-tuning setup, LTM models are not trained from scratch, and token merging for a pre-trained visual transformer is performed by simply changing all the transformer blocks of the pre-trained models to LTM-Transformer blocks. Following the settings in LTMP~\cite{bonnaerens2023learned}, the token merging mask modules are added to the original transformer blocks, and all the pre-trained weights are loaded as the initialization for the LTM models. In the fine-tuning process, the pre-trained weights are not updated and only the weights in the token merging mask modules, $\set{M^{(\ell)}}$, are updated.
We fine-tune the LTM models for 1, 5, 10, 25, and 50 epochs, respectively, and compare them with LTMP models fine-tuned for the same number of epochs. Note that LTM models and LTMP models with the same backbones have the same number of parameters. It is observed that LTM models significantly outperform LTMP models fine-tuned for the same number of epochs. For example, the LTM-Swin-B fine-tuned for 50 epochs outperforms the LTMP-Swin-B fine-tuned for 50 epochs by $1.09\%$ in top-1 accuracy. The LTM-Swin-T fine-tuned for 50 epochs outperforms the LTMP-Swin-T fine-tuned for 50 epochs by $1.08\%$ in top-1 accuracy.

In addition, we also compare the LTM with three token merging methods, ToMe~\cite{ToMe}, ToFu~\cite{kim2024token}, and LTMP~\cite{bonnaerens2023learned}, to demonstrate the superiority of our LTM. The results are shown in Table~\ref{tab:finetune}.
The inference time of all the models is also evaluated on the validation set of ImageNet-1k and reported in milliseconds (ms) per batch for an evaluation batch size of 128 on one Nvidia A100 GPU.
We also evaluate the training cost of our LTM models compared with the other fine-tuning based token merging method, LTMP~\cite{bonnaerens2023learned}.  The training time is evaluated on 4 NVIDIA A100 GPUs with an effective batch size of 512 images. We report the average training time per epoch.
The training overhead of LTM-Transformers mainly comes from the computation of $\set{\phi(\tilde X_i,a)}_{i \in \cB_j,a \in [C]}$,
$\set{Q^{(t-1)}(\tilde X \in a| Y=y)}_{a \in [C], y \in [C]}$, and $\set{\tilde \cC^{(t-1)}_a}_{a=1}^C$ as described in Algorithm~\ref{Algorithm_training}. It is observed that LTM only brings a marginal training time overhead compared to the competing token merging method.

\noindent\textbf{Training-from-Scratch Setup.} As shown in Table~\ref{tab:imagenet_results}, LTM models show reduced FLOPs and enhanced accuracy compared to their original visual transformer counterparts. For instance, LTM-MobileViT-S not only reduces the FLOPs of MobileViT-S from $1.4$G to $1.17$G but also improves the top-1 accuracy by $1.3\%$.
Similarly, LTM-Swin-T achieves a $0.5\%$ accuracy increase while lowering the FLOPs from $4.5$G to $3.7$G compared to the original Swin-T.
Moreover, we compare LTM models against models compressed by current state-of-the-art weight pruning methods for efficient visual transformers, including S$^2$ViTE~\cite{chen2021chasing}, SPViT~\cite{kong2022spvit}, and SAViT~\cite{zheng2022savit} on EfficientViT-B1 (r224). To apply S$^2$ViTE, SPViT, and SAViT on EfficientViT-B1 (r224), we first run their pruning process following the standard implementation in their papers~\cite{chen2021chasing, kong2022spvit, zheng2022savit} on the ImageNet training data. After obtaining the pruned networks, we fine-tune the pruned models following the same setting as stated in their papers~\cite{chen2021chasing, kong2022spvit, zheng2022savit}.
It is observed that LTM-EfficientViT-B1 outperforms all pruned models by at least $0.9\%$ in top-1 accuracy with even less FLOPs.
\begin{table}[!htbp]
        \centering
            \resizebox{0.75\columnwidth}{!}{
            \Huge
            \begin{tabular}{lrrr}
                \hline
                Feature backbone & \# Params. & FLOPs& mAP\\
                \hline
                MobileNetv3~\cite{howard2019searching} & 4.9 M &1.4 G& 22.0 \\
                MobileNetv2~\cite{sandler2018mobilenetv2} & 4.3 M & 1.6 G & 22.1 \\
                MobileNetv1~\cite{howard2017mobilenets} & 5.1 M &2.6 G& 22.2 \\
                MixNet~\cite{tan2019mixconv} & 4.5 M &2.2 G& 22.3 \\
                MNASNet~\cite{tan2019mnasnet} & 4.9 M &1.7 G& 23.0 \\
                 YoloV5-N (640$\times$640)~\cite{redmon2017yolo9000}& 1.9 M & 4.5 G&  28.0 \\
                 Vidt~\cite{song2021vidt}& 7.0 M & 6.7 G & 28.7 \\
                MobileViT-XS & 2.7 M &1.7 G& 24.8 \\
                \textbf{LTM-MobileViT-XS(Ours)} & 2.7 M  & 1.5 G & \textbf{25.4} \\
                MobileViT-S & 5.7 M &2.4 G& 27.7 \\
                \textbf{LTM-MobileViT-S(Ours)} & 5.7 M &2.1 G& \textbf{28.4} \\
                EfficientViT & 9.9 M &1.5 G& 28.4 \\
                \textbf{LTM-EfficientViT(Ours)} & 9.9 M &1.4 G& \textbf{28.9} \\
                \hline
            \end{tabular}
        }
        \vspace{-2mm}
\caption{Object detection performance with SSDLite.}
\label{tab:object_detection_results}
\end{table}
        \vspace{-2mm}
\subsection{Object Detection}
\label{sec:object_detection}
\textbf{Implementation details.} We incorporate ImageNet pre-trained models, that are LTM-MobileViT-XS, LTM-MobileViT-S, and LTM-EfficientViT, with the single-shot object detection backbone, SSDLite~\cite{sandler2018mobilenetv2}, to evaluate on the MS-COCO dataset~\cite{lin2014microsoft}, which comprises 117k training images and 5k validation images. We fine-tune all pre-trained LTM-Transformers within the object detection framework at a standard input resolution of $320\times 320$. These models undergo a training period of 200 epochs using the AdamW optimizer, adhering to the training protocols established in~\cite{mobilevit}. Employing a cosine learning rate scheduler, the initial learning rate of $0.0009$ is gradually reduced to $1.6e^{-6}$. For the object localization, we utilize a smooth $\ell^1$ loss, and for classification, cross-entropy losses are applied. The evaluation of performance on the validation set is conducted using the mAP metric with an IoU range from 0.50 to 0.95 in increments of 0.05.
\begin{table}[!htbp]
 \small
 \centering
  \resizebox{1\linewidth}{!}{
\begin{tabular}{lcccccc}
\hline
Methods           & mAP$^{box}$ & AP$^{b}_{50}$  & AP$^{b}_{75}$  & mAP$^{m}$ & AP$^{m}_{50}$  & AP$^{m}_{75}$   \\
\hline
EViT~\cite{liu2023efficientvit}          & 32.8               & 54.4                            & 34.5                            & 31.0                & 51.2                             & 32.2                              \\
EfficientViT-B1~\cite{cai2022efficientvit}  & 33.5               & 55.4                            & 34.8                            & 31.9                & 52.3                             & 32.7                              \\
LTM-EfficientViT-B1 & \textbf{34.3}              & \textbf{56.1}                            & \textbf{35.2}                            & \textbf{32.8}                & \textbf{52.8} &\textbf{33.1}\\
\hline
 \end{tabular}
 }
 \vspace{-2mm}
 \caption{Instance Segmentation Results on COCO.}
 \label{tab:seg_results}
\end{table}

\noindent\textbf{Results.}
We conduct a comparative study of our LTM Transformers against other lightweight feature backbones within the SSDLite object detection framework. The results, as detailed in Table~\ref{tab:object_detection_results}, illustrate significant improvements in object detection performance when the feature backbone is upgraded to include LTM-Transformer blocks. For example, substituting MobileViT-S with LTM-MobileViT-S enhances the mAP by $0.7\%$ while concurrently reducing FLOPs by $0.3$G. In addition, SSDLite equipped with LTM-EfficientViT achieves a substantial performance increase of $6.9\%$ while maintaining the same FLOPs as MobileNetV3.

\subsection{Instance Segmentation}
\label{sec:segmentation}
In this section, we assess the efficacy of LTM when applied to instance segmentation tasks using the COCO dataset~\cite{lin2014microsoft}. We utilize Mask R-CNN~\cite{he2017mask} equipped with a Feature Pyramid Network (FPN) as the segmentation head, built on the LTM-EfficientViT-B1 feature backbone. For comparative analysis, we include EfficientViT-B1~\cite{cai2022efficientvit} and EViT~\cite{liu2023efficientvit} as baseline models. Both our models and the baselines are trained on the training split of the COCO dataset and evaluated on the validation split, adhering to the protocols established by~\cite{chen2019mmdetection}. The training duration is set to 12 epochs, consistent with the 1$\times$ schedule described in~\cite{chen2019mmdetection}. The AdamW optimizer is employed for training following the practices of~\cite{liu2023efficientvit}. We initiate the learning rate at $0.001$, which is then gradually reduced following a cosine learning rate schedule. Performance metrics reported include the mean bounding box Average Precision (mAP$^b$) and mean mask Average Precision (mAP$^m$), along with bounding box Average Precision (AP$^b$) and mask Average Precision (AP$^m$) at IoU thresholds of 0.5 and 0.75. The findings, detailed in Table~\ref{tab:seg_results}, demonstrate that LTM-EfficientViT-B1 consistently enhances segmentation performance across various thresholds. For example, LTM-EfficientViT-B1 outperforms EfficientViT-B1 by $0.8\%$ and $0.9\%$ in mAP$^b$ and mAP$^m$, respectively.
\subsection{Ablation Study}
\label{sec:ablation-study}

\subsubsection{Study on the Impact of Compression Ratio}
\label{sec:study_compression}
We conduct an ablation study on the compression ratio of token merging on ViT-B. We train LTM-ViT-B models with different compression ratios in both the train-from-scratch setup and fine-tuning setup. In the fine-tuning setup, all models are fine-tuned for 50 epochs.
It is observed from the results in Table~\ref{tab:model_comparison} that although a smaller compression ratio can result in a slight accuracy drop, the LTM-ViT-B with a compression ratio of 0.65 can still achieve better performance than the original ViT-B model either fine-tuned from the pre-trained checkpoint or trained from scratch.
\begin{table}[!htbp]
\centering
\resizebox{1\columnwidth}{!}{%
\begin{tabular}{lcccc}
\hline
Methods    & FLOPs (G) & $r$ & Train-from-scratch & Fine-tuning  \\ \hline
ViT-B      & 17.58     & 1.00                  & 83.74              & 83.74       \\
LTM-ViT-B & 16.55     & 0.95                  & 84.43              & 84.32       \\
LTM-ViT-B & 15.25     & 0.90                  & 84.46              & 84.29       \\
LTM-ViT-B & 14.19     & 0.85                  & 84.33              & 84.35       \\
LTM-ViT-B & 14.89     & 0.80                  & 84.15              & 84.20       \\
LTM-ViT-B & 13.49     & 0.75                  & 83.95              & 84.05       \\
LTM-ViT-B & 12.85     & 0.70                  & 83.87              & 83.96       \\
LTM-ViT-B & 11.95     & 0.65                  & 83.75              & 83.81       \\
LTM-ViT-B & 11.03     & 0.60                  & 83.53              & 83.56       \\
LTM-ViT-B & 10.15     & 0.55                  & 83.07              & 83.14       \\
LTM-ViT-B & 9.63      & 0.50                  & 82.87              & 82.95       \\
LTM-ViT-B & 8.77      & 0.45                  & 83.46              & 83.51       \\
LTM-ViT-B & 8.30      & 0.40                  & 82.23              & 82.37       \\ \hline
\end{tabular}
}
\vspace{-2mm}
\caption{Performance comparison between LTM-ViT-B with different compression ratios. The models in the fine-tuning setup are fine-tuned for 50 epochs.}
\label{tab:model_comparison}
\end{table}

\subsubsection{Study on the Effects of LTM in Reducing IB Loss}
\label{sec:ablation-study_IB}
We study the effectiveness of LTM in reducing the IB loss and the variational upper bound of IB loss, which is the IB bound, across three vision transformers, including MobileViT-S, MobileViT-XS, and EfficientViT (r224). We compare the performance of the vision transformers with the baseline token merging method, ToME~\cite{ToMe}, LTMP~\cite{bonnaerens2023learned}, and the corresponding LTM-Transformer models with all the transformer blocks replaced with the LTM blocks.
The ablation study results for the fine-tuning setup are shown in Table~\ref{tab:ablation-IB-loss_finetune}.  The ablation study results for the train-from-scratch setup are shown in Table~\ref{tab:ib_loss_ablation_appendix}. The results indicate that although ToMe and LTMP reduce the IB loss and the IB bound in both the fine-tuning setup and the train-from-scratch setup, thereby adhering to the IB principle, which aims to enhance the correlation of features with class labels while reducing their correlation with the input, LTM can further decrease the IB loss and IB bound. In particular, our LTM models improve the vanilla vision transformers, the ToMe models, and the LTMP models by a large margin in terms of both IB loss and top-1 accuracy for both the fine-tuning setup and the train-from-scratch setup. For instance, the LTMP-ViT-B fine-tuned for 50 epochs reduces the IB loss of ViT-B by $0.00333$. The LTM-ViT-B fine-tuned for 50 epochs further reduces the IB loss of LTMP-ViT-B fine-tuned for 50 epochs by $0.00866$.
\begin{table*}[!htbp]
    \centering
    \begin{minipage}{1\textwidth}
        \centering
        \resizebox{1\columnwidth}{!}{
\begin{tabular}{l|c|cccc|cccc|cccc}
\hline
\multirow{2}{*}{Model} & \multirow{2}{*}{FLOPs} & \multicolumn{4}{c|}{Top-1}                                & \multicolumn{4}{c|}{IB Bound}                                     & \multicolumn{4}{c}{IB Loss}                                          \\ \cline{3-14}
                       &                        & 0     & 1              & 10             & 50             & 0       & 1                & 10               & 50               & 0        & 1                 & 10                & 50                \\ \hline
MobileViT-S            & 1.40 G                 & 78.40 & -              & -              & -              & 0.05782 & -                & -                & -                & -0.00432 & -                 & -                 & -                 \\
ToMe-MobileViT-S       & 1.22 G                 & 76.72 & -              & -              & -              & 0.04931 & -                & -                & -                & -0.00525 & -                 & -                 & -                 \\
LTMP-MobileViT-S       & 1.26 G                 & -     & 77.53          & 77.82          & 78.14          & -       & 0.04902          & 0.04735          & 0.04542          & -        & -0.00765          & -0.00874          & -0.00913          \\
\textbf{LTM-MobileViT-S}       & 1.17 G                 & -     & \textbf{77.72} & \textbf{78.34} & \textbf{79.05} & -       & \textbf{0.03095} & \textbf{0.02967} & \textbf{0.02683} & -        & \textbf{-0.01430} & \textbf{-0.01576} & \textbf{-0.01692} \\ \hline
EfficientViT-B1        & 0.52 G                 & 79.40 & -              & -              & -              & 0.06014 & -                & -                & -                & -0.00451 & -                 & -                 & -                 \\
ToMe-EfficientViT-B1   & 0.47 G                 & 78.81 & -              & -              & -              & 0.04642 & -                & -                & -                & -0.00732 & -                 & -                 & -                 \\
LTMP-EfficientViT-B1   & 0.52 G                 & -     & 79.21          & 79.32          & 79.40          & -       & 0.04537          & 0.04219          & 0.03970          & -        & -0.00802          & -0.00916          & -0.00995          \\
\textbf{LTM-EfficientViT-B1}  & 0.44 G                 & -     & \textbf{79.39} & \textbf{79.62} & \textbf{80.22} & -       & \textbf{0.02874} & \textbf{0.02703} & \textbf{0.02635} & -        & \textbf{-0.01585} & \textbf{-0.01664} & \textbf{-0.01704} \\ \hline
ViT-B                  & 17.58 G                & 83.74 & -              & -              & -              & 0.05539 & -                & -                & -                & -0.00419 & -                 & -                 & -                 \\
ToMe-ViT-B             & 13.12 G                & 82.86 & -              & -              & -              & 0.04583 & -                & -                & -                & -0.00647 & -                 & -                 & -                 \\
LTMP-ViT-B             & 13.46 G                & -     & 83.29          & 83.44          & 83.55          & -       & 0.04392          & 0.04275          & 0.04086          & -        & -0.00665          & -0.00693          & -0.00752          \\
\textbf{LTM-ViT-B}             & 12.85 G                & -     & \textbf{83.35} & \textbf{83.76} & \textbf{83.96} & -       & \textbf{0.03732} & \textbf{0.03506} & \textbf{0.03082} & -        & \textbf{-0.01425} & \textbf{-0.01572} & \textbf{-0.01618} \\ \hline
\end{tabular}
        }
        \vspace{-2mm}
        \caption{Ablation study on the effects of LTM in reducing IB loss in the fine-tuning setup. Both LTMP models and LTM models fine-tuned for 1, 10, 50 epochs are evaluated.}
        \label{tab:ablation-IB-loss_finetune}
    \end{minipage}%
\end{table*}

\begin{table}[!htbp]
 \small
 \centering
 \resizebox{1\columnwidth}{!}{
        \begin{tabular}{lccccc}
            \hline
            Model                    & FLOPs  & Top-1         & IB Bound           & IB Loss  \\
            \hline
            MobileViT-S              & 1.40 G & 78.40         & 0.05782  & -0.00432 \\
            ToMe-MobileViT-S         & 1.22 G & 76.72         & 0.04931  & -0.00525 \\
            LTMP-MobileViT-S         & 1.26 G & 78.14         & 0.04542  & -0.00913 \\
            LTM-MobileViT-S         & 1.17 G & \textbf{79.68} & \textbf{0.02425} & \textbf{-0.01725} \\
            \hline
            EfficientViT-B1          & 0.52 G & 79.40         & 0.06014  & -0.00451 \\
            ToMe-EfficientViT-B1     & 0.47 G & 78.81         & 0.04642  & -0.00732 \\
            LTMP-EfficientViT-B1     & 0.52 G & 79.40         & 0.03970 & -0.00995 \\
            LTM-EfficientViT-B1     & 0.44 G & \textbf{80.20} & \textbf{0.02689} & \textbf{-0.01730} \\
            \hline
            ViT-B                    & 17.58 G & 83.74         & 0.05539  & -0.00419 \\
            ToMe-ViT-B               & 13.12 G & 82.86         & 0.04583  & -0.00647 \\
            LTMP-ViT-B               & 13.46 G & 83.55         & 0.04086  & -0.00752 \\
            LTM-ViT-B               & 12.85 G & \textbf{83.87} & \textbf{0.03094} & \textbf{-0.01636} \\
            \hline
        \end{tabular}
 }
 \vspace{-3mm}
 \caption{Ablation study on the effects of LTM in reducing IB loss in the train-from-scratch setup.  }
 \vspace{-3mm}
 \label{tab:ib_loss_ablation_appendix}
\end{table}

\begin{figure*}[!t]
\begin{center}
     \begin{subfigure}[b]{0.375\textwidth}
        \centering
        \includegraphics[height=0.55\textwidth]{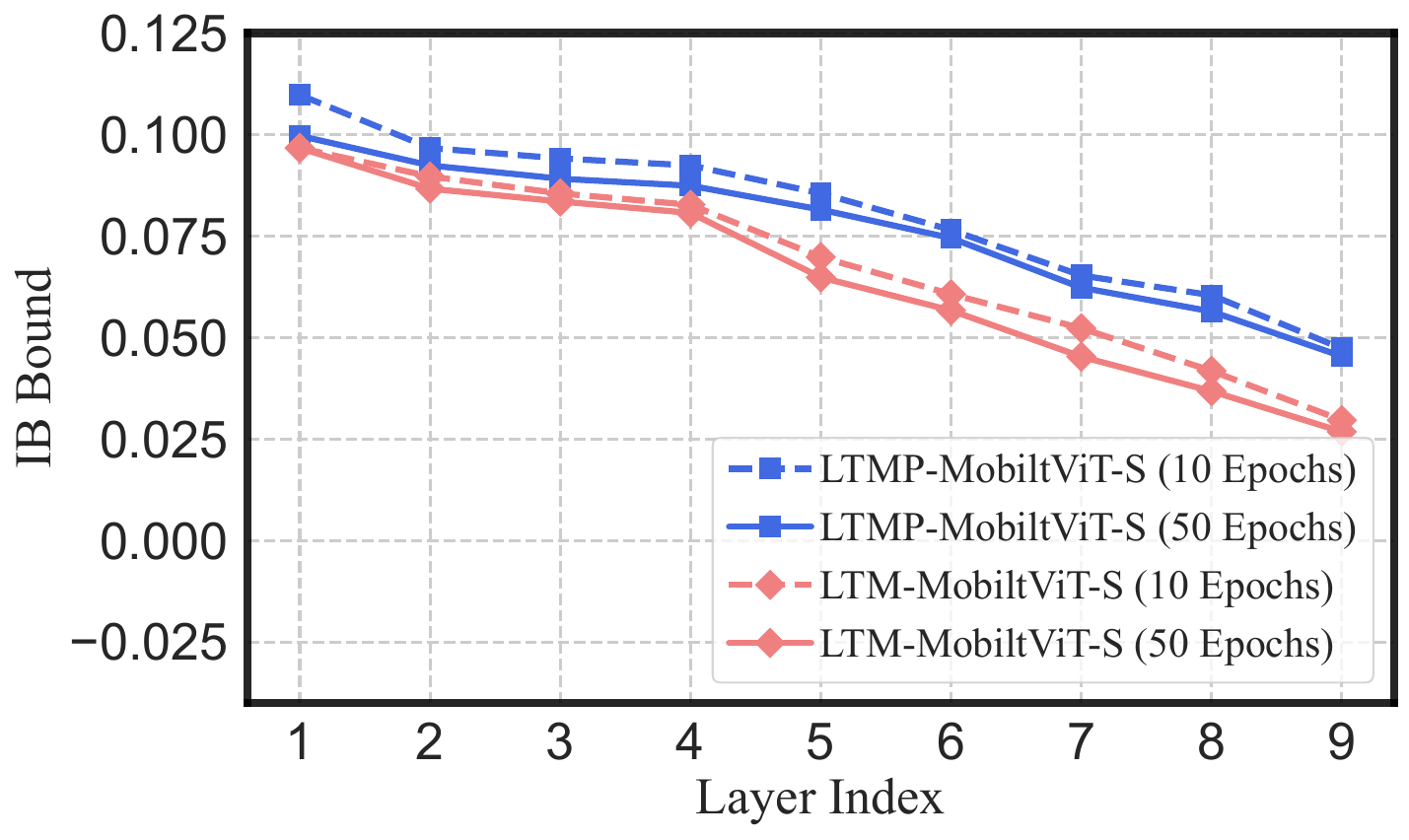}
        \vspace{-2mm}
        \caption{IB bound ($\textup{IBB}(G)$) comparison between MobileViT-S and LTM-MobileViT-S.}
    \end{subfigure}
    \hspace{6mm}
    \begin{subfigure}[b]{0.375\textwidth}
        \centering
        \includegraphics[height=0.55\textwidth]{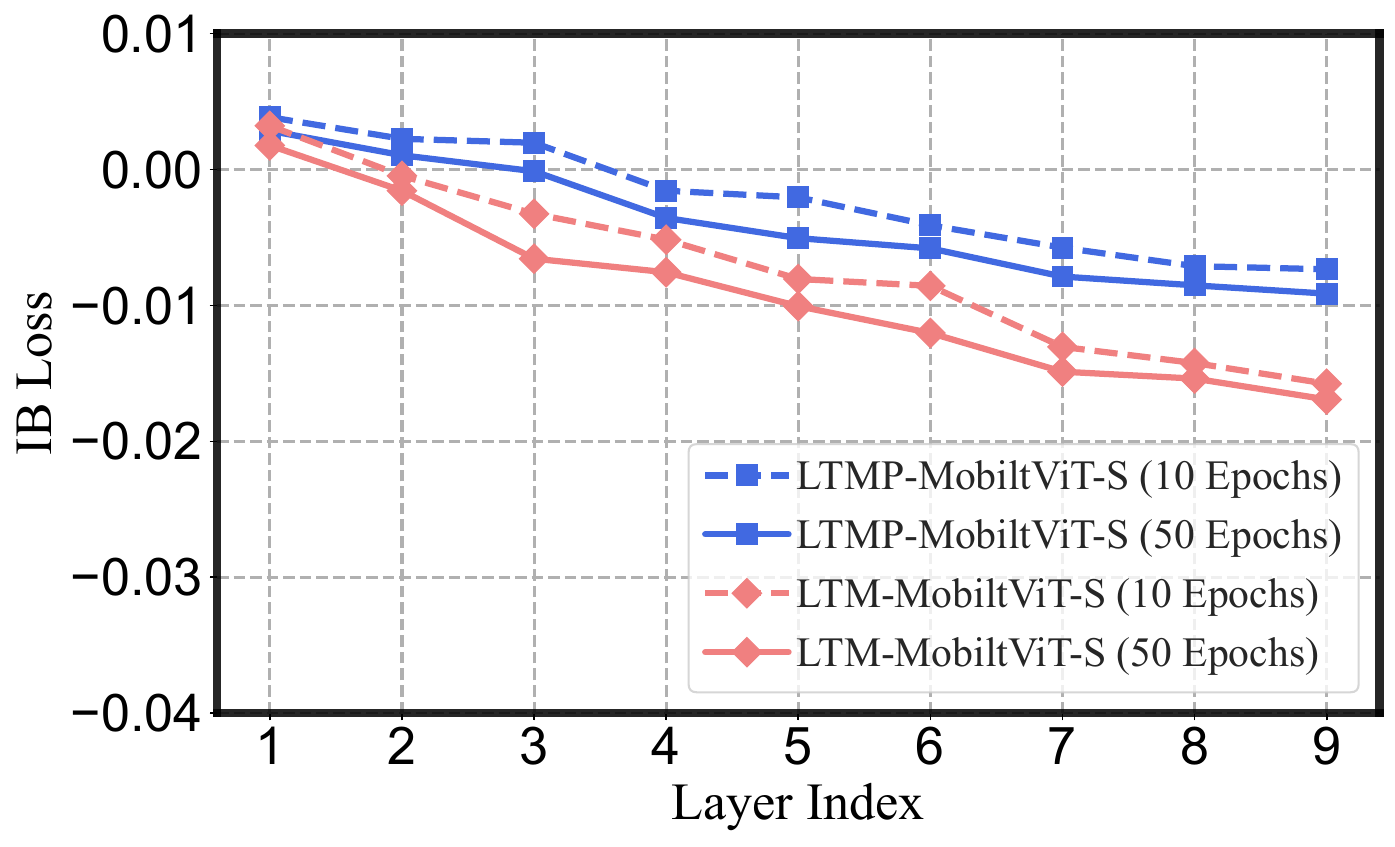}
        \vspace{-2mm}
        \caption{IB loss ($\textup{IB}(G)$) comparison between MobileViT-S and LTM-MobileViT-S.}
    \end{subfigure}
\end{center}
\vspace{-5mm}
\caption{IB bound and IB loss comparison between MobileViT-S and LTM-MobileViT-S at different transformer layers.}
\label{fig:IB_plots}
\end{figure*}

\begin{figure*}[!t]
\begin{center}
     \begin{subfigure}[b]{0.225\textwidth}
        \centering
        \includegraphics[height=0.7\columnwidth]{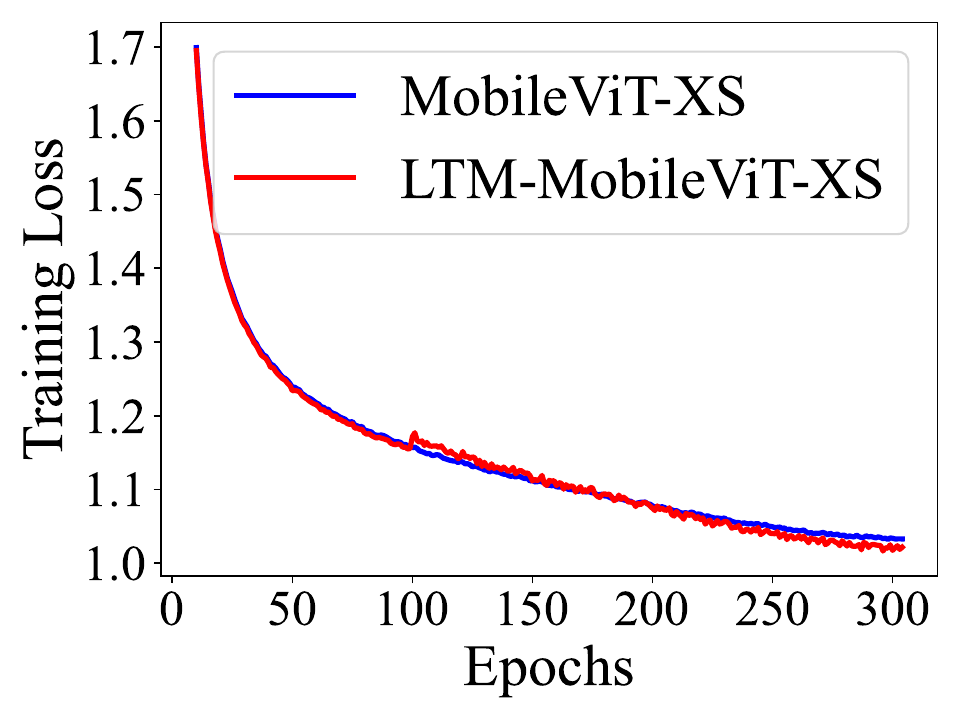}
        \vspace{-2mm}
        \caption{Training loss comparison between MobileViT-XS and LTM-MobileViT-XS.}
    \end{subfigure}
    \hspace{5mm}
    \begin{subfigure}[b]{0.225\textwidth}
        \centering
        \includegraphics[height=0.7\columnwidth]{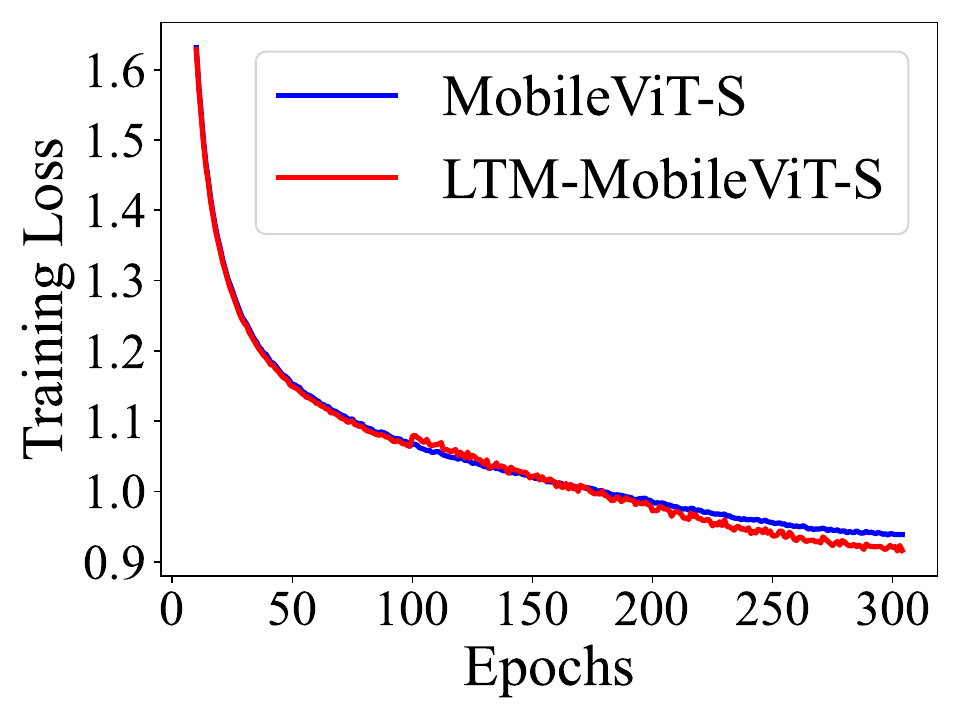}
        \vspace{-2mm}
        \caption{Training loss comparison between MobileViT-S and LTM-MobileViT-S.}
    \end{subfigure}
    \hspace{5mm}
    \begin{subfigure}[b]{0.225\textwidth}
        \centering
        \includegraphics[height=0.7\columnwidth]{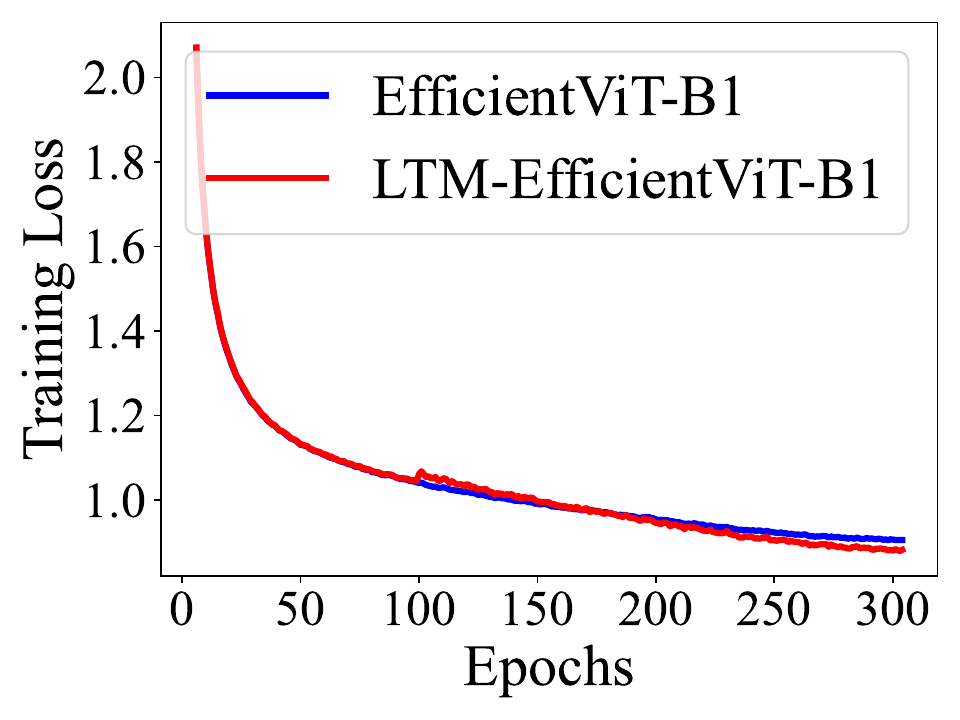}
        \vspace{-2mm}
        \caption{Training loss comparison between EfficientViT-B1 and LTM-EfficietViT-B1.}
    \end{subfigure}
        \hspace{5mm}
    \begin{subfigure}[b]{0.225\textwidth}
        \centering
        \includegraphics[height=0.7\columnwidth]{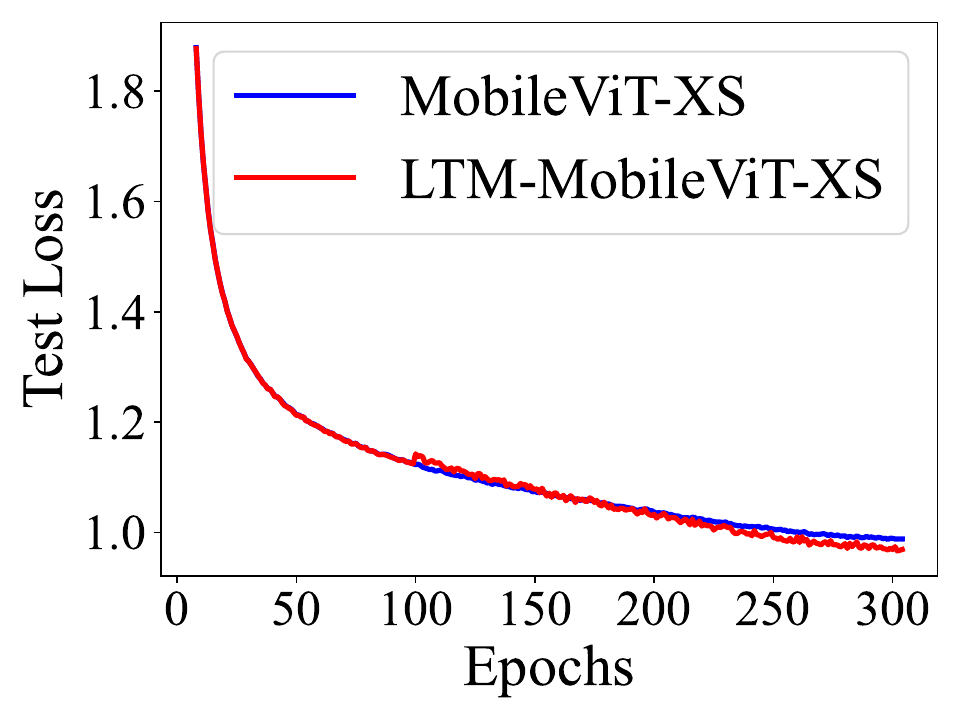}
        \vspace{-2mm}
        \caption{Test loss comparison between MobileViT-XS and LTM-MobileViT-XS.}
    \end{subfigure}
        \hspace{5mm}
    \begin{subfigure}[b]{0.225\textwidth}
        \centering
        \includegraphics[height=0.7\columnwidth]{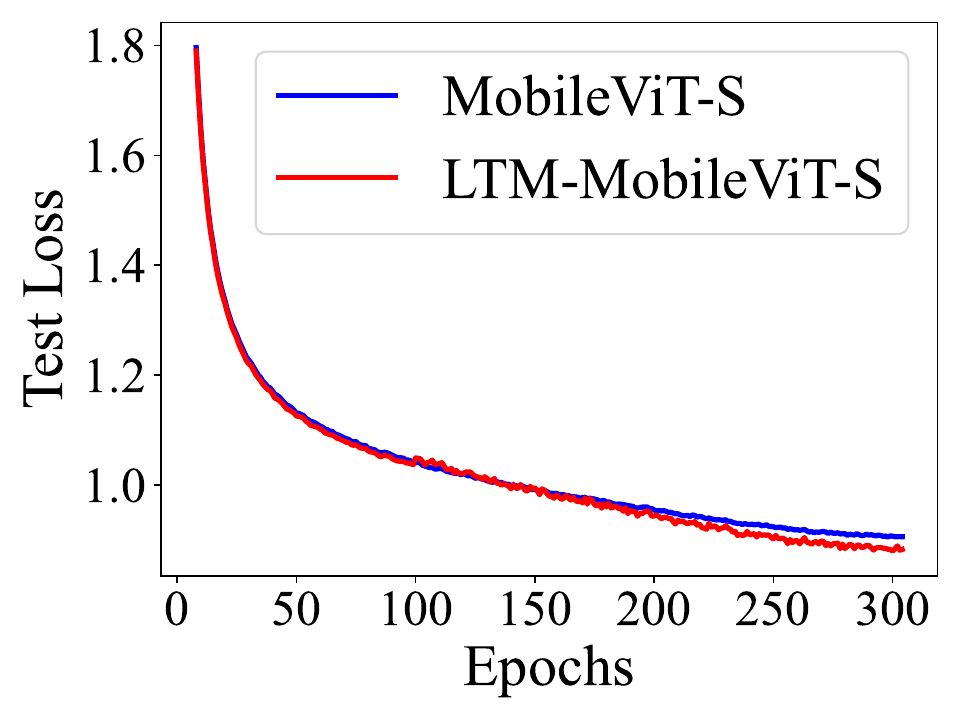}
        \vspace{-2mm}
        \caption{Test loss comparison between MobileViT-S and LTM-MobileViT-S.}
    \end{subfigure}
    \hspace{5mm}
    \begin{subfigure}[b]{0.225\textwidth}
        \centering
        \includegraphics[height=0.7\columnwidth]{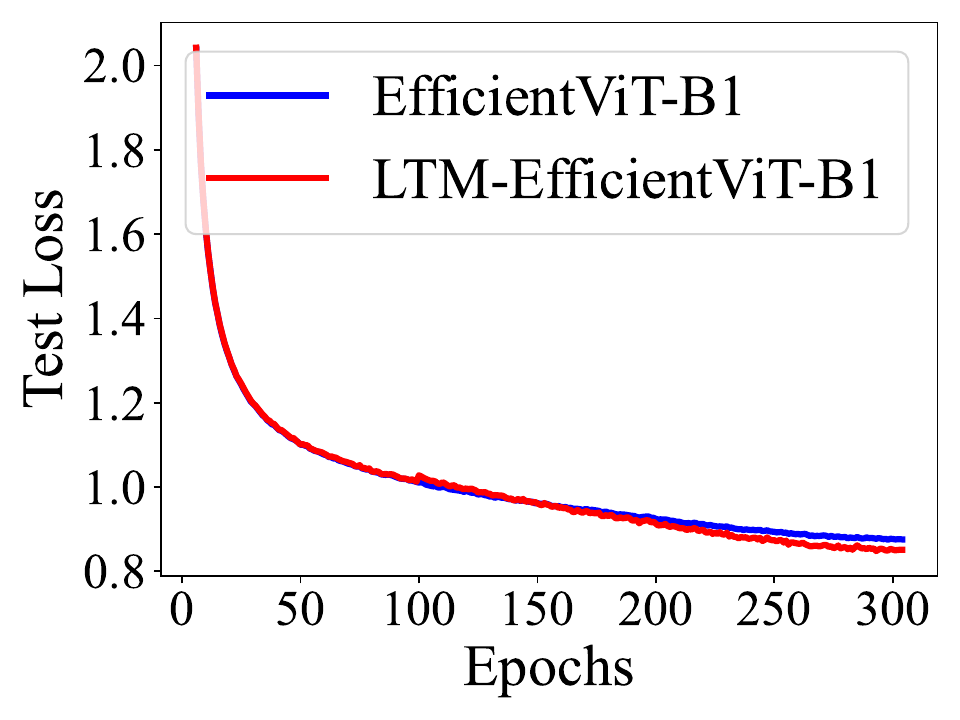}
        \vspace{-2mm}
        \caption{Test loss comparison between EfficientViT-B1 and LTM-EfficietViT-B1.}
    \end{subfigure}
\end{center}
\vspace{-5mm}
\caption{Training loss and test loss comparison between LTM-Transformer networks and corresponding baseline models.}
\label{fig:loss}
\end{figure*}
\subsubsection{Study on the IB loss and IB bound at Different Layers}
\label{sec:IB_different_layers}
To study how the IB loss $\textup{IB}(G)$, and the variational upper bound for the IB loss, $\textup{IBB}(G)$, decrease with respect to layer index $\ell$ of a LTM-Transformer network, $\textup{IB}(G)$ and $\textup{IBB}(G)$ for both MobileViT-S and LTM-MobileViT-S across different transformer layers are illustrated in Figure~\ref{fig:IB_plots}. Both models contain 9 transformer layers. It is observed from Figure~\ref{fig:IB_plots} that both $\textup{IB}(G)$ and $\textup{IBB}(G)$ decrease in deeper layers with larger layer indices of MobileViT-S and LTM-MobileViT-S.  This observation suggests that features in deeper layers correlate more closely with the class labels and less with the input features, adhering to the IB principle. Moreover,  LTM-MobileViT-S reduces both $\textup{IB}(G)$ and $\textup{IBB}(G)$ to lower levels in deeper layers compared to MobileViT-S. These observations evidence that the  mask module in the LTM-Transformer block, which generates the informative token merging mask by (\ref{eq:Gmask-GD-MLP-SGD}) can
effectively reduce both $\textup{IB}(G)$ and $\textup{IBB}(G)$,
better adhering to the IB principle than the vanilla
MobileViT-S.
At the early stage of the training after $100$ epochs, the IB bound and the IB loss of LTM-MobileViT-S are similar to those of the MobileViT-S.
After training for $300$ epochs, the IB bound and the IB loss of LTM-MobileViT-S are much smaller than those of the MobileViT-S.
\begin{figure*}[!htbp]
\begin{center}
\includegraphics[width=0.7\textwidth]{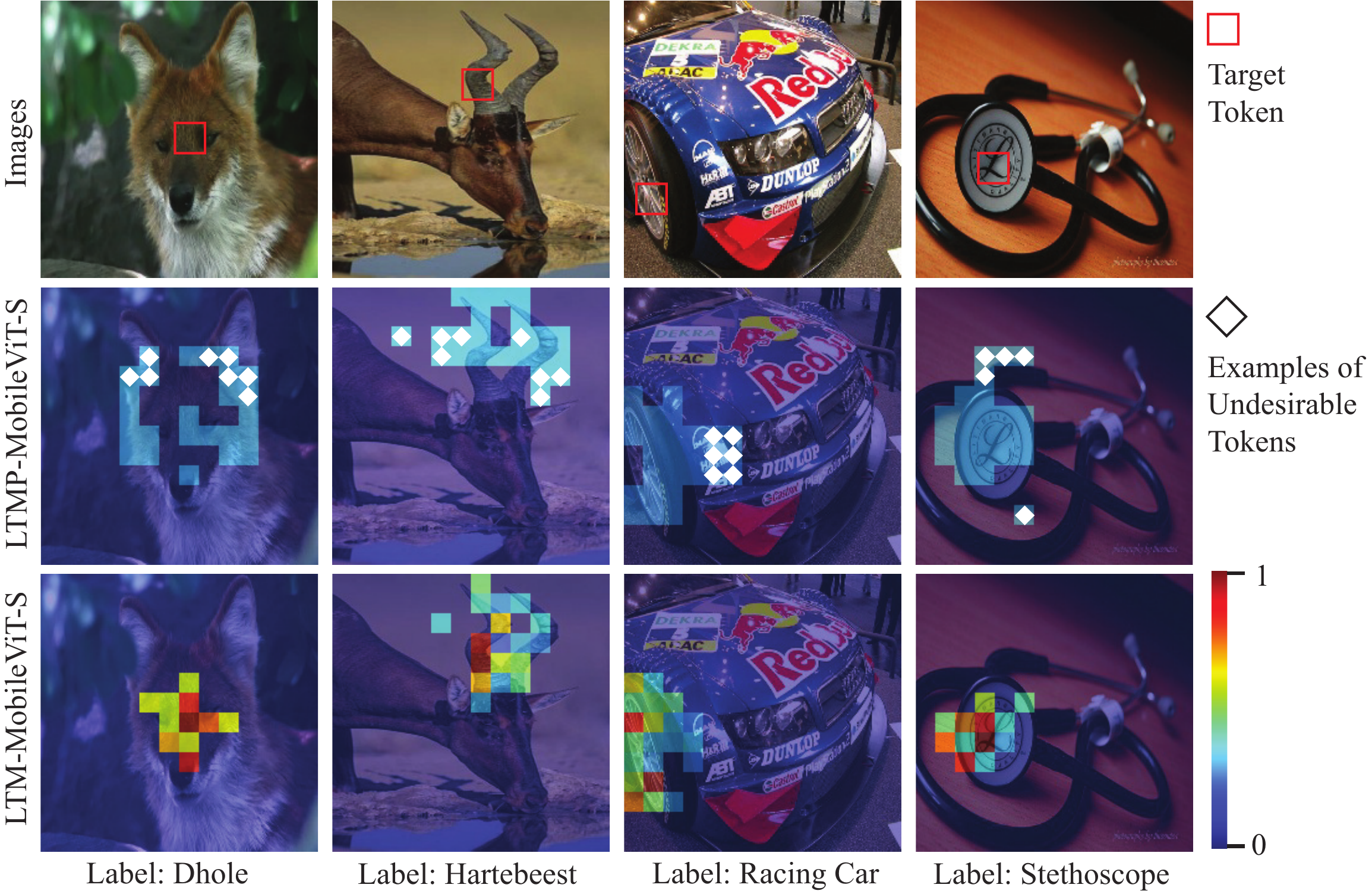}
\vspace{-3mm}
\caption{Comparisons of token merging weights generated in the first transformer block in LTM-MobileViT-S and LTMP-MobileViT-S. The target tokens resulting from the token merging process are marked with red boxes in the first row.
The token merging weights for tokens merged into the target tokens by LTMP-MobileViT-S are illustrated in the second row.
The token merging weights for tokens merged into the target tokens by LTM-MobileViT-S are illustrated in the third row.
Tokens that are more semantically similar to the target tokens should receive higher merging weights so as to achieve informative token merging.
Examples of the undesirable tokens, which are not semantically similar to the target tokens, are marked with white diamonds. In contrast with LTMP, the heatmaps of the merging weights generated by LTM illustrate that LTM usually assigns larger weights to tokens that are more semantically similar to the target tokens. }
\label{fig:merge-weights}
\end{center}
\vspace{-5mm}
\end{figure*}
\subsubsection{Training Loss and Test Loss of LTM-Transformers}
\label{sec:train-test-loss-LTM}

In this section, we illustrate the training loss and the test loss of LTM-MobileViT-XS, LTM-MobileViT-S, and LTM-EfficientViT-B1. In comparison, we also illustrate the training loss and test loss of MobileViT-XS, MobileViT-S, and EfficientViT-B1. All the models are trained for 300 epochs for each cycle with two cycles. The plots are illustrated in Figure~\ref{fig:loss} for the second cycle. It can be observed that LTM-Transformer networks achieve lower training losses and test losses at the end of the training, which demonstrates the benefit of LTM-Transformers in improving the performance of the visual transformers through the IB-inspired token merging.

\subsubsection{Visualization Results}
\label{sec:visual_results}
To study the effectiveness of LTM-Transformer in selecting informative tokens during the token merging process, we visualize the token merging masks in the first transformer block of LTMP-MobileViT-S and LTM-MobileViT-S for particular target tokens in selected images from ImageNet in Figure~\ref{fig:merge-weights}. Each image is divided into $16\times16$ tokens. For each example, we select only the most representative target token that encapsulates the critical features of the objects in the image, and the target token is a weighted average of several self-attention output tokens with the token merging weights
in the token merging mask.
The input images are illustrated in the first row.
Every target token is marked in red boxes in the first row, which is
calculated by $\tilde{X}(G^{(\ell)}) = \pth{Z^{\top}G^{(\ell)}}^{\top}$ as described in Section~\ref{sec:channel-selection-attention-weights}. The third row illustrates the token merging mask $G^{(\ell)}$ generated by the LTM for the target tokens with heatmaps.
We remark that the token merging masks for LTM and LTMP contain the token merging weights for these two methods.
The token merging masks generated by LTMP for the same target tokens are illustrated in the second row.
The class labels for each image are presented at the bottom of each column.
Tokens that are more semantically similar to the target token are expected to receive larger token merging weights in the token merging masks, so that the original tokens can contribute to the target token in accordance to their informativeness with respect to the target token.
The visualization results illustrate that the mask module in the LTM-Transformer block usually assigns higher merging weights to tokens that are more semantically similar to the target token.
In contrast, LTMP assigns uniform merging weights to all the selected tokens, which usually contain some undesirable tokens that are not semantically similar or even relevant to the target tokens.
In the example of the dhole in the first column, the LTM-Transformer block assigns larger merging weights to tokens around the eyes of the hole as the target token is also around the eyes.
In contrast, the competing baseline LTMP mistakenly merges tokens on the ears of the hole into the target token.
In the example of the hartebeest in the second column, the LTM-Transformer block assigns larger weights to tokens on the twisted horns of the hartebeest as the target token is also on the twisted horns of the hartebeest. The uneven distribution of token merging weights computed by LTM is consistent with the goal of informative token merging in our LTM-Transformer, where more informative original tokens, whose features are more similar to that of the target token, should receive higher token merging weights than other original tokens. In contrast, the competing baseline LTMP mistakenly merges tokens in the background around the twisted horns of the hartebeest.
In the example of the racing car in the third column, the LTM-Transformer block assigns larger weights to tokens on the wheel of the car as the target token is also on the wheel. In contrast, the competing baseline LTMP mistakenly merges tokens around the light of the car. In the example of the Stethoscope in the fourth column, the LTM-Transformer block assigns larger weights to tokens on the diaphragm of the stethoscope. In contrast, the competing baseline LTMP mistakenly merges tokens on the top of the diaphragm of the stethoscope in the background.

In addition, the visualization results illustrate that LTMP usually assigns larger weights to tokens that are more semantically similar to the target tokens. For instance, in the example of the hartebeest, tokens on the left horn of the hartebeest have larger token merging weights than tokens on the right horn of the hartebeest because the target token is on the left horn, which exhibits visual patterns that are not shared on the right horn. Similarly, in the example of the stethoscope, tokens around the center of the diaphragm have larger merging weights than tokens on the boundary of the diaphragm because the target token is in the center of the diaphragm, which exhibits visual patterns that are not shared on the boundary of the diaphragm.

\subsubsection{Study on the Impacts of Compression Ratio at Different Layers of the LTM-Transformer}
\label{sec:layer_compression_ratio}

In our experiments, the compression ratio $r$ for all the layers in the LTM-Transformer is set to a fixed value of $0.7$. To study how the compression ratio at different layers in the LTM-Transformer influences the IB loss at the corresponding layer, we train ablation models by changing the compression ratios at different layers from $0.7$ to $0.3$, $0.5$, and $0.9$. For each of the ablation models, we only change the compression ratio at a specific layer and keep the compression ratios for other layers fixed at $0.7$. The ablation study is performed for all $9$ layers of the LTM-MobileViT-S in the fine-tuning setup, and all ablation models are fine-tuned for $50$ epochs. Let $\ell$ denote the layer index where the compression ratio is changed. The IB loss at the corresponding layer where the compression ratio is changed and the top-1 accuracy of the ablation models are shown in Table~\ref{tab:layer_compression_ratio}. It is observed that reducing the compression ratio from $r=0.9$ to $r=0.7$ can reduce the IB loss at the corresponding layer and increase the top-1 accuracy, benefiting from merging redundant tokens. However, further reducing the compression ratio to $r=0.5$ and $r=0.3$ leads to an increased IB loss and reduced top-1 accuracy, especially at bottom layers where $\ell \leq 5$. This is attributed to the excessive merging of tokens, potentially resulting in the loss of informative features that are critical for the classification task.

\begin{table}[!htbp]
\centering
\resizebox{1\columnwidth}{!}{%
\begin{tabular}{c|cc|cc|cc|cc}
\hline
Layer & \multicolumn{2}{c|}{$r=0.3$} & \multicolumn{2}{c|}{$r=0.5$} & \multicolumn{2}{c|}{$r=0.7$} & \multicolumn{2}{c}{$r=0.9$} \\ \cline{2-9}
Index $\ell$ & Top-1       & IB Loss       & Top-1       & IB Loss       & Top-1       & IB Loss       & Top-1       & IB Loss       \\ \hline
1     & 78.22       &  0.00214      & 78.70       &  0.00190      & 79.05       &  0.00178      & 79.02       &  0.00185      \\
2     & 78.43       & -0.00137      & 78.85       & -0.00140      & 79.05       & -0.00156      & 79.01       & -0.00148      \\
3     & 78.55       & -0.00622      & 78.90       & -0.00630      & 79.05       & -0.00648      & 78.98       & -0.00641      \\
4     & 78.57       & -0.00725      & 79.05       & -0.00742      & 79.02       & -0.00762      & 78.98       & -0.00750      \\
5     & 78.75       & -0.00938      & 78.83       & -0.00950      & 79.05       & -0.01005      & 79.00       & -0.00977      \\
6     & 78.95       & -0.01185      & 79.02       & -0.01197      & 79.05       & -0.01204      & 79.04       & -0.01199      \\
7     & 79.00       & -0.01430      & 79.04       & -0.01464      & 79.05       & -0.01488      & 79.01       & -0.01486      \\
8     & 79.00       & -0.01525      & 79.03       & -0.01556      & 79.05       & -0.01592      & 79.05       & -0.01592      \\
9     & 79.04       & -0.01718      & 79.03       & -0.01704      & 79.05       & -0.01725      & 79.00       & -0.01728      \\ \hline
\end{tabular}
}
\vspace{-2mm}
\caption{Ablation study on the impacts of compression ratio in reducing the IB loss at different layers of the LTM-MobileViT-S. All ablation models are fine-tuned for 50 epochs. $\ell$ denotes the layer index where the compression ratio is changed. The IB loss at the $\ell$-th layer and the top-1 accuracy of the ablation models are shown.}
\label{tab:layer_compression_ratio}
\vspace{-2mm}
\end{table}

In addition, we have performed an ablation study on evaluating the transfer learning capability of the LTM-Transformer in Section 2.1 of the supplementary. The results show that LTM does not affect the transfer learning capability of the original model.
Moreover, we have shown that the parameter-efficient post-training method, LoRA~\cite{hu2022lora}, can significantly reduce the training costs of the LTM-Transformer without scarfing the performance in Section 2.2 of the supplementary.
In Section 2.3 of the supplementary, we have shown that the LTM-Transformer exhibits faster inference speed than the models compressed by competing token merging methods across different batch sizes.
In Section 3 of the supplementary, we have discussed the potential future work of this paper for improved training efficiency.

\vspace{-2mm}
\section{Conclusion}
\label{sec:conclusion}
\vspace{-.03in}
In this paper, we propose a novel transformer block, Transformer with Learnable Token Merging, or LTM-Transformer. LTM-Transformer blocks perform token
merging so as to render a transformer network
with less FLOPs and faster inference speed.
A LTM-Transformer block generates an informative token merging mask
for token merging in a learnable manner, which is
inspired by the reduction of the
Information-Bottleneck loss.
A network with LTM-Transformer blocks can either be trained from scratch or fine-tuned from the pre-trained backbone, and it
enjoys a reduction of IB loss and reduced FLOPs while maintaining
a compelling prediction accuracy. We demonstrate the effectiveness of
LTM-Transformer by replacing all the transformer blocks in several popular visual transformers with LTM-Transformer blocks. Extensive experiments on various tasks demonstrate the effectiveness of the LTM-Transformer.

\ifCLASSOPTIONcaptionsoff
  \newpage
\fi



\bibliographystyle{IEEEtran}
\bibliography{ref}
%
%
%

%
\vspace{-8mm}

 \begin{IEEEbiography}[{\includegraphics[width=1in,height=1.25in,clip,keepaspectratio]{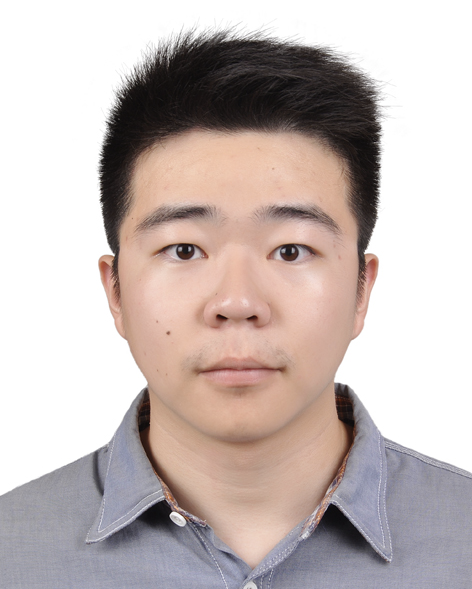}}]{Yancheng Wang}
 is a Ph.D. student in Computer Science at Arizona State University. He received his B.S. degree in Mathematics and Applied Mathematics from Xi'an Jiaotong University in 2018. His research interest is in developing robust and efficient deep learning methods, such as neural architecture search, and their applications to image and graph data.
 \end{IEEEbiography}
 \vspace{-8mm}
 \begin{IEEEbiography}[{\includegraphics[width=1in,height=1.25in,clip,keepaspectratio]{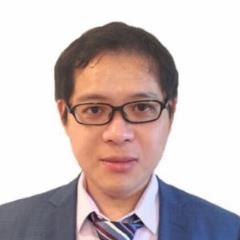}}]{Yingzhen Yang}
 Dr. Yingzhen Yang has been an assistant professor at SCAI since 2019, and his research areas include deep learning and statistical machine learning with more than $70$ papers published in these areas. He is the recipient of the Best Paper Finalist for ECCV (European Conference on Computer Vision) in 2016. He has served as a senior program committee member, a committee member or a reviewer for premier conferences and journals in machine learning, deep learning, and artificial intelligence, including Journal of Machine Learning Research (JMLR), Journal of Artificial Intelligence Research (JAIR), IEEE Transactions on Image Processing (TIP), the International Joint Conferences on Artificial Intelligence (IJCAI), the Association for the Advancement of Artificial Intelligence (AAAI) Conference on Artificial Intelligence, the International Conference on Machine Learning (ICML), the Annual Conference on Neural Information Processing Systems (NeurIPS), the International Conference on Learning Representations (ICLR).
 \end{IEEEbiography}

\end{document}